\documentclass{article}
\usepackage{arxiv}
\usepackage{etoolbox}
\usepackage[T1]{fontenc}
\usepackage{natbib}
\setcitestyle{authoryear,round,citesep={;},aysep={,},yysep={;}}
\usepackage{amsmath,amssymb,amsfonts,amsthm,mathtools}
\usepackage{booktabs,tabularx,multirow}
\usepackage[table]{xcolor}
\usepackage{graphicx,subcaption,microtype,enumitem,placeins,flafter,needspace,url,xurl,hyperref}
\definecolor{linkblue}{HTML}{3569A8}
\definecolor{palegreen}{HTML}{EAF3F0}
\definecolor{paleblue}{HTML}{EAF1F7}
\definecolor{palegray}{HTML}{F3F5F7}
\hypersetup{colorlinks=true,linkcolor=linkblue,citecolor=linkblue,urlcolor=linkblue}
\newtheorem{theorem}{Theorem}[section]
\newtheorem{proposition}[theorem]{Proposition}
\newtheorem{corollary}[theorem]{Corollary}
\DeclareMathOperator*{\argmax}{arg\,max}
\DeclareMathOperator{\osc}{osc}
\DeclareMathOperator{\LSE}{LSE}
\newcommand{\R}{\mathbb R}

\newcommand{\Ee}{\mathbb E}

\title{Prediction Dynamics in\\Depth-Recurrent Language Models}
\renewcommand{\shorttitle}{Prediction Dynamics in Depth-Recurrent Language Models}
\renewcommand{\undertitle}{}
\renewcommand{\headeright}{}
\date{}
\newcommand{\PreprintPDFAuthors}{Xinyue Luo, Fei Yu}
\author{Xinyue Luo \qquad Fei Yu\\
{\normalfont Ant Group}\\
{\normalfont\small\href{mailto:mangduo.lxy@antgroup.com}{\texttt{mangduo.lxy@antgroup.com}}}}

\hypersetup{pdftitle={Prediction Dynamics in Depth-Recurrent Language Models},pdfauthor={\PreprintPDFAuthors},pdfsubject={Prediction dynamics across recurrent depth},pdfkeywords={depth-recurrent language models, prediction dynamics, answer preservation}}
\makeatletter
\patchcmd{\@maketitle}{\textsc{\undertitle}\\}{}{}
  {\PackageError{arxiv-title}{Could not remove the subtitle line}{Check arxiv.sty.}}
\patchcmd{\@maketitle}
  {\vskip 0.4in \@minus 0.1in \center{\@date} \vskip 0.2in}
  {\par\vskip 0.2in}{}
  {\PackageError{arxiv-title}{Could not remove the date block}{Check arxiv.sty.}}
\makeatother
\begin{document}
\maketitle

\begin{abstract}
Depth-recurrent language models refine predictions through repeated latent updates. Why can intermediate answers agree with the endpoint while their scores continue to change? We derive a sharp margin characterization that decomposes the conservatism of a magnitude bound into common translation, direction relative to the winner, and the pairing of each competitor's update with its score gap. Across Huginn-3.5B and Ouro-1.4B, accounting for update direction and competitor pairing reduces the mean earliest qualifying depth by a further 22.5--34.4\% of the total depth beyond translation removal under full answer-text scoring. This retrospective comparison uses completed trajectories. Substantial contributions also occur under label scoring. For shared predictive distributions, we separate common and contrast motion orthogonally and express the common component through candidate-set mass and within-set concentration. Common and contrast energies can attenuate at different rates, allowing a growing preference-change share to coexist with shrinking absolute updates. These findings explain finite-depth answer preservation through the geometry and composition of observed score changes.
\end{abstract}

\section{Introduction}

Additional inference-time computation can improve language-model reasoning, with its value depending on allocation across problems \citep{snell2025testtime}. Latent approaches place computation before discrete answer generation, through pause positions \citep{goyal2024pause}, continuous intermediate states \citep{hao2025coconut}, or repeated applications of shared transformer blocks \citep{geiping2025latent,zhu2025ouro}. Depth-recurrent models make the number of updates an inference-time variable. A benchmark score summarizes the outcome at one budget; intermediate predictions reveal how that outcome develops. We study how additional latent computation reshapes answer preferences and which components of this evolution affect the final prediction.

A large score update may shift all candidates together, enlarge the winner's lead, or favor a competitor whose gap remains too large to close. Such updates can preserve the same answer despite substantial score changes. We derive an exact decomposition of the gap between a magnitude-based margin bound and the realized endpoint margin into translation, winner-directed motion, and competitor pairing. We then establish the corresponding ordering of earliest qualifying depths. Across Huginn and Ouro, direction and pairing account for substantial conservatism remaining after translation removal under both label selection and complete answer-text scoring. The empirical contribution is the size and composition of this gap along observed trajectories. We also characterize what common score motion contains. For scores from one predictive distribution, we first separate common and contrast components orthogonally, then express the common component through candidate-set mass and within-set concentration, including their interaction. Their depth trajectories show how the share of preference change can grow while its absolute magnitude falls. Together, the two analyses connect the composition of score changes to their relation with the decision boundary.

Paired scoring and precision comparisons examine how these findings depend on the readout construction and numerical arithmetic. Coordinate-permutation references characterize update alignment and joint variation, while held-out tests assess prediction. These comparisons place the descriptive findings alongside their empirical limits.

\section{Related Work}

Latent computation makes the amount of inference separable from parameter count. Adaptive computation time and Universal Transformers introduce iterative computation \citep{graves2016act,dehghani2019universal}, while looped constructions establish its expressive power \citep{giannou2023programmable,saunshi2025latent}. Huginn and Ouro scale recurrence to language modeling \citep{geiping2025latent,zhu2025ouro}; relaxed weight sharing \citep{bae2025relaxed}, token-specific recursion \citep{bae2025mor}, and variable-length training \citep{jeddi2026loopformer} extend its architectural flexibility. Equilibrium models \citep{bai2019deq} and Jacobian regularization \citep{yang2026stars} connect recurrence to hidden-state stability. These perspectives motivate examining what successive latent updates change in the prediction, alongside how much computation they perform.

Intermediate readouts make that evolution observable. Vocabulary-space analysis \citep{geva2022vocabulary}, tuned probes and cross-layer maps \citep{belrose2023tunedlens,yomdin2024jump} examine predictions before the final layer; layer contrasts can also guide decoding \citep{chuang2024dola}. Final-answer agreement in Ouro \citep{zhu2025ouro} and answer convergence in explicit reasoning traces \citep{liu2025answer} connect intermediate predictions to their endpoints. Early-exit methods use such signals for stopping \citep{xin2020deebert,schuster2022calm,jazbec2024fast}, while self-speculative methods verify drafts with later computation \citep{elhoushi2024layerskip,liu2024kangaroo}. The observed trajectory also depends on how an answer is scored: continuation likelihood is subject to surface-form competition \citep{holtzman2021surface}, label selection requires binding answers to symbols \citep{robinson2023multiplechoice}, and option order or label preferences can change the winner \citep{zheng2024selectors}. Our paired-format comparison connects these scoring effects to prediction dynamics across recurrent depth.

To analyze this evolution, relative comparisons provide a bridge between readout coordinates and the selected answer. Classical oscillation seminorms identify vectors differing by a constant \citep{gaubert2015dobrushin}; softmax shares that invariance \citep{gao2018softmax}, and log-ratio coordinates represent relative probability information \citep{egozcue2003logratio}. Margin-based robustness relates changes in these comparisons to decision boundaries \citep{hein2017formal,tsuzuku2018lipschitz}. Using translation-invariant comparisons and margin inequalities, we derive an exact decomposition of the gap between a magnitude-based bound and the realized endpoint margin, then examine its components across depth. For shared predictive distributions, mass and concentration give the common readout component a complementary probabilistic interpretation.

\section{The Geometry of Prediction Stability}
\label{sec:geometry}

Prediction stability depends on how score updates interact with the gaps
between candidate answers. For a fixed endpoint, we characterize answer
preservation through these relative changes and derive a hierarchy of
margin criteria, from a magnitude bound to the exact competitor constraints.
This hierarchy determines the ordering of earliest qualifying depths along
any completed score trajectory with a fixed candidate set.

\subsection{From intermediate predictions to answer comparisons}
For an input $x$, a recurrent model produces hidden states $h_t=\Phi(h_{t-1};x)$ and intermediate
predictive distributions through its output head. Fix an endpoint depth $T\ge1$
and a question with $K\ge2$ candidate answers. For scores $s_t\in\R^K$, let
$y_t=\argmax_k s_{t,k}$ under a deterministic tie rule. Endpoint agreement compares
$y_t$ with $y_T$; accuracy compares $y_t$ with the ground-truth label $y^\star$.
Candidate identities remain fixed across depths.

For a fixed intermediate depth, write $\delta=s_T-s_t$.
Write $\mathbf1=(1,\ldots,1)^\top\in\R^K$ for the all-ones vector.

The comparison between answers $k$ and $l$ changes by $\delta_k-\delta_l$.
Consequently, its largest absolute change is
\begin{equation}
 B_{\rm q}:=\max_{k,l}|\delta_k-\delta_l|
 =\osc(\delta):=\max_k\delta_k-\min_k\delta_k.
 \label{pd:eq:comparison}
\end{equation}
All comparisons are unchanged precisely on $\operatorname{span}\{\mathbf1\}$.
The equivalence class $[\delta]=\delta+\operatorname{span}\{\mathbf1\}$
therefore determines all comparison changes. The classical oscillation
seminorm \citep{gaubert2015dobrushin} induces a norm on this quotient,
measuring the largest such change.

\begin{proposition}[Comparison quotient and optimal centers]
\label{pd:prop:quotient-centers}
For $K\ge2$ and $\delta\in\R^K$, the map
$[\delta]\mapsto\osc(\delta)$ is a norm on
$\R^K/\operatorname{span}\{\mathbf1\}$.
Let $\operatorname{mid}(\delta)=(\max_k\delta_k+\min_k\delta_k)/2$.
For every $b\in\R$,
\begin{equation}
 2\|\delta-b\mathbf1\|_\infty
 =B_{\rm q}+2|\operatorname{mid}(\delta)-b|,
 \qquad
 B_{\rm q}=2\min_b\|\delta-b\mathbf1\|_\infty.
 \label{pd:eq:centering}
\end{equation}
The midrange is optimal for the maximum-coordinate norm; mean centering,
$b=\bar\delta=K^{-1}\sum_k\delta_k$, is optimal for squared Euclidean error.
Both minimizing centers are unique.
\end{proposition}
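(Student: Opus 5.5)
The plan is to verify everything directly from the definitions, starting with the centering identity \eqref{pd:eq:centering}. Write $M=\max_k\delta_k$, $m=\min_k\delta_k$, so that $B_{\rm q}=\osc(\delta)=M-m$ and $\operatorname{mid}(\delta)=(M+m)/2$. For any $b\in\R$, the coordinates of $\delta-b\mathbf1$ lie in $[m-b,M-b]$. Both endpoints are attained, so
\[
\|\delta-b\mathbf1\|_\infty=\max\{|M-b|,|m-b|\}=\max\{M-b,\;b-m\},
\]
where the last equality holds because $M\ge m$ forces the larger absolute value to be one of these two (nonnegative) quantities. Next I will use the elementary identity $\max\{u,v\}=\tfrac12(u+v)+\tfrac12|u-v|$ with $u=M-b$ and $v=b-m$. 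This gives $u+v=M-m$ and $u-v=M+m-2b=2(\operatorname{mid}(\delta)-b)$, and hence $2\|\delta-b\mathbf1\|_\infty=B_{\rm q}+2|\operatorname{mid}(\delta)-b|$. Minimizing over $b$ then yields $B_{\rm q}=2\min_b\|\delta-b\mathbf1\|_\infty$. The minimizer is exactly $b=\operatorname{mid}(\delta)$, and it is unique because $|\operatorname{mid}(\delta)-b|$ vanishes only there.

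For the norm claim, I first check that $\osc$ is well defined on the quotient. Adding $c\mathbf1$ shifts $M$ and $m$ by $c$, so $\osc(\delta+c\mathbf1)=\osc(\delta)$. Homogeneity $\osc(\lambda\delta)=|\lambda|\osc(\delta)$ follows by separating the cases $\lambda\ge0$ and $\lambda<0$; in the second case max and min swap roles. For the triangle inequality, the cleanest route is the formula just proved. Choosing $b_1$ and $b_2$ optimal for $\delta$ and $\delta'$ respectively gives
\[
\osc(\delta+\delta')\le2\|\delta+\delta'-(b_1+b_2)\mathbf1\|_\infty\le\osc(\delta)+\osc(\delta').
\]
Alternatively, one can use $\max(\delta+\delta')\le\max\delta+\max\delta'$ together with the analogous inequality for the min. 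Definiteness is the remaining step: $\osc(\delta)=0$ iff $M=m$, iff $\delta\in\operatorname{span}\{\mathbf1\}$, iff $[\delta]=0$. This also uses the fact, stated before the proposition, that $\operatorname{span}\{\mathbf1\}$ is exactly the set of vectors leaving all comparisons unchanged. The hypothesis $K\ge2$ ensures that the quotient is nontrivial, so the statement is not vacuous.

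For the Euclidean claim, I will expand
\[
f(b)=\|\delta-b\mathbf1\|_2^2=\sum_k(\delta_k-b)^2=\sum_k(\delta_k-\bar\delta)^2+K(b-\bar\delta)^2,
\]
where the cross term vanishes because $\sum_k(\delta_k-\bar\delta)=0$. This shows at once that $b=\bar\delta$ is the minimizer and that it is unique, since $K>0$.

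None of these steps is deep. The only place that needs care is the reduction $\|\delta-b\mathbf1\|_\infty=\max\{M-b,b-m\}$, where one must check that the absolute values are handled correctly when $b$ lies outside $[m,M]$. The max-identity handles this uniformly. For example, if $b>M$, then $u<0$ and $v>0$, and the maximum is $v=b-m=|m-b|$, as required.
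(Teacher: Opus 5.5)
Your proof is correct and follows essentially the same route as the paper. Both reduce $\|\delta-b\mathbf1\|_\infty$ to the two extreme coordinates and rewrite it as $B_{\rm q}/2+|\operatorname{mid}(\delta)-b|$; the paper uses $\max\{|c-b+q/2|,|c-b-q/2|\}=q/2+|c-b|$ where you use $\max\{u,v\}=\tfrac12(u+v)+\tfrac12|u-v|$. Both then verify the norm axioms directly and complete the square for the Euclidean center. The remaining differences are cosmetic: you get the triangle inequality from the centering formula rather than from $\osc(v+w)=\max_{k,l}\{(v_k-v_l)+(w_k-w_l)\}$, and your parenthetical ``(nonnegative)'' for $M-b$ and $b-m$ is loose, since only their maximum is nonnegative, which your final paragraph already handles.
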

The optimized equality is the finite-dimensional quotient-norm identity
of \citet[Lemma~4.1]{gaubert2015dobrushin}.
Equation~\ref{pd:eq:centering} gives each center's exact excess over pairwise
change, including the raw radius $B_{\rm raw}=2\|\delta\|_\infty$ at $b=0$.
For comparison with conventional centers, define
\[
 b_{\rm mean}=\bar\delta,\qquad
 b_{\rm LSE}=\LSE(s_T)-\LSE(s_t),\qquad
 \LSE(v)=\log\sum_{k=1}^K e^{v_k}.
\]

\begin{proof}[Proof of Proposition~\ref{pd:prop:quotient-centers}]
Let $K\ge2$, $v\in\R^K$, and
$\alpha=\max_kv_k$, $\beta=\min_kv_k$, $q=\alpha-\beta$, $c=(\alpha+\beta)/2$.
For every $b\in\R$,
\begin{equation}
 \begin{aligned}
 \|v-b\mathbf1\|_\infty
 &=\max\{|\alpha-b|,|\beta-b|\}\\
 &=\max\{|c-b+q/2|,|c-b-q/2|\}
 =q/2+|c-b|.
 \end{aligned}
 \label{pd:eq:extrema-proof}
\end{equation}
Consequently,
\[
 \arg\min_{b\in\R}\|v-b\mathbf1\|_\infty=\{c\},\qquad
 \inf_{b\in\R}\|v-b\mathbf1\|_\infty=\frac{\osc(v)}2.
\]
For $v,w\in\R^K$ and $\lambda,b\in\R$,
\[
 \begin{aligned}
 \osc(v+b\mathbf1)&=\osc(v),&
 \osc(\lambda v)&=|\lambda|\osc(v),\\
 \osc(v+w)
 &=\max_{k,l}\{(v_k-v_l)+(w_k-w_l)\}
 \le\osc(v)+\osc(w),\\
 \osc(v)=0&\iff v\in\operatorname{span}\{\mathbf1\}.
 \end{aligned}
\]
Thus $[v]\mapsto\osc(v)$ is a norm on
$\R^K/\operatorname{span}\{\mathbf1\}$, and
\[
 \|[v]\|_{\infty,\mathrm{quot}}
 :=\inf_{b\in\R}\|v-b\mathbf1\|_\infty
 =\tfrac12\osc(v).
\]
This is the finite-dimensional specialization of
\citet[Lemma~4.1]{gaubert2015dobrushin}.

For $\bar v=K^{-1}\sum_kv_k$, orthogonality gives
\[
 \begin{aligned}
 \|v-b\mathbf1\|_2^2
 &=\|v-\bar v\mathbf1\|_2^2+K(b-\bar v)^2,\\
 \arg\min_{b\in\R}\|v-b\mathbf1\|_2^2&=\{\bar v\}.
 \end{aligned}
\]
For $K=2$, $\bar v=c$. For each $K\ge3$, the examples
\[
 \begin{array}{c|cc}
 v&2\|v\|_\infty&2\|v-\bar v\mathbf1\|_\infty\\\hline
 \mathbf1&2&0\\
 (-1,1,\ldots,1)&2&4(K-1)/K>2
 \end{array}
\]
exclude either uniform ordering of the raw and mean-centered radii.
Both are at least $\osc(v)$ by Equation~\ref{pd:eq:extrema-proof}.
\end{proof}

\subsection{From comparison changes to answer stability}
Fix a depth with a unique winner $a=y_t$. For each competitor $b\ne a$, define
its initial gap and its relative update toward the winner as
\begin{equation}
 g_b=s_{t,a}-s_{t,b}>0,\qquad
 u_b=\delta_b-\delta_a,\qquad
 m=\min_{b\ne a}g_b,\qquad h=\max_{b\ne a}u_b.
 \label{pd:eq:gap-update}
\end{equation}
The decision region for $a$ is the intersection of halfspaces
$\{\varepsilon:\varepsilon_b-\varepsilon_a<g_b\text{ for all }b\ne a\}$.
Its endpoint reserve is
\begin{equation}
 R=\min_{b\ne a}(g_b-u_b)
   =\min_{b\ne a}(s_{T,a}-s_{T,b}).
 \label{pd:eq:reserve}
\end{equation}
Thus $R>0$ characterizes $a$ as the strict endpoint winner, while $R=0$ places the
endpoint on a decision boundary. The signed quantity $h$ records the largest
competitor-relative update; $h<0$ means that every gap has increased.
Margin-based perturbation analysis relates these halfspaces to update bounds
\citep{hein2017formal,tsuzuku2018lipschitz}. The following result connects
the sharp oscillation bound to the margin attained by the actual update.

\begin{theorem}[Sharp comparison geometry and endpoint margin]
\label{pd:thm:stability}
Fix $s_t\in\R^K$, $K\ge2$, with unique winner $a$ and gaps as in
Equation~\ref{pd:eq:gap-update}. For every $r\ge0$,
\begin{equation}
 \min_{\osc(\varepsilon)\le r}\;\min_{b\ne a}
       (g_b-\varepsilon_b+\varepsilon_a)=m-r.
 \label{pd:eq:sharp-margin}
\end{equation}
Thus every allowed update preserves $a$ as a strict winner if and only if $m>r$.
For the realized update $\delta=s_T-s_t$,
\begin{equation}
 m-B_{\rm raw}\le m-B_{\rm q}\le m-h\le R,
 \label{pd:eq:ladder}
\end{equation}
and the total slack satisfies
\begin{equation}
 \begin{aligned}
 R-(m-B_{\rm raw})
 &=\underbrace{|\max_k\delta_k+\min_k\delta_k|}_{A_{\rm tr}}
  +\underbrace{(B_{\rm q}-h)}_{A_{\rm dir}}
  +\underbrace{(R-m+h)}_{A_{\rm pair}}.
 \end{aligned}
 \label{pd:eq:slack}
\end{equation}
All three terms are nonnegative, and
\begin{equation}
 A_{\rm pair}=\min_{b\ne a}\big[(g_b-m)+(h-u_b)\big].
 \label{pd:eq:pairing}
\end{equation}
Equality $A_{\rm pair}=0$ holds exactly when one competitor attains both
$g_b=m$ and $u_b=h$.
\end{theorem}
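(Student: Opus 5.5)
The plan is to prove the sharp identity \eqref{pd:eq:sharp-margin} first, then obtain the ladder \eqref{pd:eq:ladder} link by link. The slack decomposition \eqref{pd:eq:slack} then follows by telescoping, and the pairing formula by rewriting $R$.

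\textbf{Sharp identity.} For the lower bound, note that for any $\varepsilon$ with $\osc(\varepsilon)\le r$ and any $b\ne a$ we have $\varepsilon_b-\varepsilon_a\le\osc(\varepsilon)\le r$. Hence $g_b-\varepsilon_b+\varepsilon_a\ge g_b-r\ge m-r$.

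For attainment, pick $b^\ast$ with $g_{b^\ast}=m$ and set $\varepsilon=r e_{b^\ast}$. Then $\osc(\varepsilon)=r$, since $K\ge2$ guarantees a zero coordinate. The $b^\ast$ term equals $m-r$, so the minimum is exactly $m-r$. The ``if and only if'' follows immediately: all allowed updates keep $a$ strictly winning exactly when this minimum is positive.

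\textbf{Ladder.} I would establish the three inequalities in order.
\begin{itemize}
\item $B_{\rm q}\le B_{\rm raw}$ is Proposition~\ref{pd:prop:quotient-centers} at $b=0$, since $2\|\delta\|_\infty\ge\osc(\delta)$.
\item $h\le B_{\rm q}$ holds because each $u_b=\delta_b-\delta_a\le\max\delta-\min\delta$.
\item $m-h\le R$ holds because for every $b$, $g_b-u_b\ge m-h$; taking the minimum over $b$ gives it.
\end{itemize}
These also give $A_{\rm dir}=B_{\rm q}-h\ge0$ and $A_{\rm pair}=R-(m-h)\ge0$.

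\textbf{Slack decomposition.} From Proposition~\ref{pd:prop:quotient-centers} with $b=0$,
\[
B_{\rm raw}-B_{\rm q}=2|\operatorname{mid}(\delta)|=|\max_k\delta_k+\min_k\delta_k|=A_{\rm tr}\ge0 .
\]
Then
\[
R-(m-B_{\rm raw})=(B_{\rm raw}-B_{\rm q})+(B_{\rm q}-h)+(R-m+h),
\]
which is a pure telescoping identity.

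\textbf{Pairing formula.} Write
\[
R-m+h=\min_{b\ne a}(g_b-u_b)-m+h=\min_{b\ne a}\big[(g_b-m)+(h-u_b)\big],
\]
where the constants $-m+h$ can be moved inside the minimum. Each bracket is a sum of two nonnegative terms. The minimum is therefore zero iff some $b$ has both terms zero, i.e.\ $g_b=m$ and $u_b=h$.

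The only mildly delicate point is attainment in the sharp identity: the perturbation must have oscillation exactly $r$ and hit the minimal-gap competitor. Placing mass $r$ on $b^\ast$ with zero elsewhere handles this, using $K\ge2$. Everything else is direct inequality chasing.
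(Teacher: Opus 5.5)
Your proposal is correct and follows the paper's proof essentially step for step. The lower bound comes from $\varepsilon_b-\varepsilon_a\le\osc(\varepsilon)$, attainment from $\varepsilon=r e_{b_\star}$, the ladder and $A_{\rm tr}$ from $B_{\rm raw}-B_{\rm q}=|\max_k\delta_k+\min_k\delta_k|$ together with $h\le B_{\rm q}$ and $R\ge m-h$, and the pairing formula from moving the constant $h-m$ inside a finite minimum of brackets that are each a sum of two nonnegative terms.
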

The three slacks isolate comparison-invariant translation, the use of a
symmetric radius in place of a signed winner-directed update, and extrema
attained by different competitors. Their sum is the exact gap between the
raw lower bound and the realized endpoint reserve. At $r=m$, the sharp bound
permits a tie, explaining the strict-winner condition.

\begin{proof}[Proof of Theorem~\ref{pd:thm:stability}]
Let $I=\{1,\ldots,K\}\setminus\{a\}$ and
$\mathcal B_r=\{\varepsilon\in\R^K:\osc(\varepsilon)\le r\}$.
For every $\varepsilon\in\mathcal B_r$ and $b\in I$,
\[
 g_b-\varepsilon_b+\varepsilon_a
 \ge m-\osc(\varepsilon)\ge m-r.
\]
Choose $b_\star\in\arg\min_{b\in I}g_b$ and set
$\varepsilon^\star=r e_{b_\star}$. Since $a\ne b_\star$,
\[
 \osc(\varepsilon^\star)=r,\qquad
 g_{b_\star}-\varepsilon^\star_{b_\star}+\varepsilon^\star_a=m-r.
\]
Hence the minimum is attained, and
\[
 \min_{\varepsilon\in\mathcal B_r}\min_{b\in I}
       (g_b-\varepsilon_b+\varepsilon_a)=m-r.
\]
In particular,
\[
 \bigl[\forall\varepsilon\in\mathcal B_r,\ \forall b\in I:\quad
       g_b-\varepsilon_b+\varepsilon_a>0\bigr]
 \iff m>r.
\]
For the realized update, Equation~\ref{pd:eq:extrema-proof} yields
\[
 B_{\rm raw}-B_{\rm q}
 =|\max_k\delta_k+\min_k\delta_k|\ge0.
\]
Moreover,
\[
 h=\max_{b\in I}(\delta_b-\delta_a)\le B_{\rm q},\qquad
 R=\min_{b\in I}(g_b-u_b)\ge m-h.
\]
These inequalities give Equation~\ref{pd:eq:ladder}. Its successive differences satisfy
\[
 \begin{aligned}
 R-(m-B_{\rm raw})
 &=(B_{\rm raw}-B_{\rm q})+(B_{\rm q}-h)+(R-m+h),\\
 R-m+h&=\min_{b\in I}\bigl[(g_b-m)+(h-u_b)\bigr].
 \end{aligned}
\]
For every $b\in I$, $g_b-m\ge0$ and $h-u_b\ge0$.
Since $I$ is finite and nonempty,
\[
 \begin{aligned}
 A_{\rm pair}=0
 &\iff\exists b\in I:\ (g_b-m)+(h-u_b)=0\\
 &\iff\exists b\in I:\ g_b=m\ \text{and}\ u_b=h.
 \end{aligned}
\]
\end{proof}
Since $m>0$,
\[
 m-h>0\iff m-\max\{0,h\}>0.
\]
Clipping $h$ at zero leaves the passing test unchanged; the slack identity
uses the signed $h$.

\subsection{Relating the geometry to recurrence depth}
Fix $T\in\mathbb N_{\ge1}$. For a candidate depth set $\mathcal T\subseteq\{1,\ldots,T-1\}$, let $D_j$
be the earliest depth with a unique winner at which bound $j\in\{\mathrm{raw,q,dir,res}\}$ in
Equation~\ref{pd:eq:ladder} is positive; set $D_j=T$ if none qualifies.
Write $F_j(d)=\Pr(D_j\le d)$ for its cumulative earliest-depth curve.

\begin{corollary}[Nested earliest-depth distributions]
\label{pd:cor:depth}
On the same candidate set, the four tests in
Theorem~\ref{pd:thm:stability} give
$D_{\rm raw}\ge D_{\rm q}\ge D_{\rm dir}\ge D_{\rm res}$ on every trajectory.
Consequently $F_{\rm raw}\le F_{\rm q}\le F_{\rm dir}\le F_{\rm res}$, and
the normalized depth area satisfies
\begin{equation}
 W_j=1-\frac{\Ee[D_j]}{T}
     =\frac1T\sum_{d=1}^{T-1}F_j(d).
 \label{pd:eq:depth-area}
\end{equation}
\end{corollary}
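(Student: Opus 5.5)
The plan is to derive the corollary pointwise from the ladder in Equation~\ref{pd:eq:ladder} and then pass to distributions. Fix one trajectory $(s_1,\dots,s_T)$ with its fixed candidate set. At each depth $t\in\mathcal T$ with a unique winner, Theorem~\ref{pd:thm:stability} applies with $\delta=s_T-s_t$ and gives
\[
 m-B_{\rm raw}\le m-B_{\rm q}\le m-h\le R.
\]
Positivity therefore propagates upward: if the raw bound is positive at $t$, so are the q, dir and res bounds. Writing $Q_j\subseteq\mathcal T$ for the set of qualifying depths of test $j$, this gives the nested inclusions $Q_{\rm raw}\subseteq Q_{\rm q}\subseteq Q_{\rm dir}\subseteq Q_{\rm res}$. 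Depths without a unique winner are excluded from every $Q_j$ alike, so they do not disturb the nesting.

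Next, set $D_j=\min Q_j$ when $Q_j$ is nonempty and $D_j=T$ otherwise. The map $Q\mapsto\min(Q\cup\{T\})$ is antitone under inclusion, which gives $D_{\rm raw}\ge D_{\rm q}\ge D_{\rm dir}\ge D_{\rm res}$ on every trajectory. The inequality holds surely, so for every $d$ the events are nested, $\{D_{\rm raw}\le d\}\subseteq\{D_{\rm q}\le d\}\subseteq\cdots$. Monotonicity of probability then yields $F_{\rm raw}\le F_{\rm q}\le F_{\rm dir}\le F_{\rm res}$, under whatever distribution over questions and trajectories defines $\Pr$.

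For the area identity \eqref{pd:eq:depth-area}, use the tail-sum formula for a random variable taking values in $\{1,\dots,T\}$:
\[
 \Ee[D_j]=\sum_{d=0}^{T-1}\Pr(D_j>d)=1+\sum_{d=1}^{T-1}\bigl(1-F_j(d)\bigr)=T-\sum_{d=1}^{T-1}F_j(d).
\]
The $d=0$ term equals $1$ because $D_j\ge1$. Dividing by $T$ gives $W_j=1-\Ee[D_j]/T=T^{-1}\sum_{d=1}^{T-1}F_j(d)$. As a by-product, this also gives the ordering $W_{\rm raw}\le W_{\rm q}\le W_{\rm dir}\le W_{\rm res}$.

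There is no real obstacle here. The only points needing care are bookkeeping: the fallback value $D_j=T$ must be the same for all four tests, and $\mathcal T\subseteq\{1,\dots,T-1\}$ ensures that qualifying depths are strictly below $T$, so the tail sum stops at $T-1$. Both hold by the definitions given.
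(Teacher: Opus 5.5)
Your proposal is correct and follows essentially the same route as the paper: the ladder gives nested qualifying sets $\mathcal A_{\rm raw}\subseteq\mathcal A_{\rm q}\subseteq\mathcal A_{\rm dir}\subseteq\mathcal A_{\rm res}$, taking the minimum with the fallback $T$ reverses these inclusions, and the resulting nested events give the CDF ordering. Your tail-sum formula is the expectation of the paper's pointwise identity $T-D_j=\sum_{d=1}^{T-1}\mathbf1\{D_j\le d\}$, so the area step is the same argument.
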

Expectations are over complete questions, without independence across depths.
Differences in $W$ are areas between the curves. The proof also gives the
weighted-sum form for sparse candidate sets.

\begin{proof}[Proof of Corollary~\ref{pd:cor:depth}]
Fix $T\in\mathbb N_{\ge1}$ and
$\mathcal T\subseteq\{1,\ldots,T-1\}$. For each trajectory, define
\[
 \mathcal U=\{t\in\mathcal T:s_t\text{ has a unique maximizer}\}.
\]
For $t\in\mathcal U$, let $L_j(t)$ denote the corresponding bound in
Equation~\ref{pd:eq:ladder}, and set
\[
 \mathcal A_j=\{t\in\mathcal U:L_j(t)>0\},\qquad
 D_j=\min(\mathcal A_j\cup\{T\}).
\]
Theorem~\ref{pd:thm:stability} implies
\[
 \mathcal A_{\rm raw}\subseteq\mathcal A_{\rm q}
 \subseteq\mathcal A_{\rm dir}\subseteq\mathcal A_{\rm res}
 \quad\Longrightarrow\quad
 D_{\rm raw}\ge D_{\rm q}\ge D_{\rm dir}\ge D_{\rm res}.
\]
For every $d\in\R$,
\[
 \{D_{\rm raw}\le d\}\subseteq\{D_{\rm q}\le d\}
 \subseteq\{D_{\rm dir}\le d\}\subseteq\{D_{\rm res}\le d\}.
\]
Taking probabilities gives the CDF ordering. Since $1\le D_j\le T$,
\begin{equation}
 T-D_j=\sum_{d=1}^{T-1}\mathbf1\{D_j\le d\}.
 \label{pd:eq:depth-indicator}
\end{equation}
Linearity of expectation therefore gives
\[
 1-\frac{\Ee[D_j]}T
 =\frac1T\sum_{d=1}^{T-1}\Pr(D_j\le d)
 =\frac1T\sum_{d=1}^{T-1}F_j(d).
\]
For $\mathcal T=\{t_1<\cdots<t_J\}\ne\varnothing$, set $t_{J+1}=T$.
Since $D_j\in\mathcal T\cup\{T\}$,
\[
 T-D_j=\sum_{k=1}^J(t_{k+1}-t_k)\mathbf1\{D_j\le t_k\}.
\]
Taking expectations yields
\begin{equation}
 W_j=\frac1T\sum_{k=1}^J(t_{k+1}-t_k)F_j(t_k).
 \label{pd:eq:sparse-area}
\end{equation}
If $\mathcal T=\varnothing$, then $D_j=T$ and $W_j=0$, consistent with the
empty-sum convention.
\end{proof}
The terminal value $F_j(T)=1$ includes the fallback. The depth areas compare
earliest qualifying depths on a chosen grid; the slacks in
Equation~\ref{pd:eq:slack} concern a single interval.

For $j\in\{\mathrm{mean,LSE}\}$, define $D_j$ on the same candidate set by
\[
 D_j=\min\bigl(\{t\in\mathcal T:
    s_t\text{ has a unique winner},\ 
    m_t>2\|s_T-s_t-b_{j,t}\mathbf1\|_\infty\}\cup\{T\}\bigr),
\]
where $m_t=\min_{k\ne y_t}(s_{t,y_t}-s_{t,k})$ and $b_{j,t}$ is the corresponding center for $s_T-s_t$.
We use the same definition of $W_j$ and the trajectorywise reference
$D_{\rm env}=\min(D_{\rm mean},D_{\rm LSE})$, with $W_{\rm env}=1-\Ee[D_{\rm env}]/T$.
This reference selects the earlier qualifying depth separately for each
completed trajectory.

\FloatBarrier
\section{Finite-Depth Prediction Geometry}
\label{pd:sec:findings}

The criteria in Section~\ref{sec:geometry} can now be evaluated on completed
trajectories. We begin with complete answer-text scoring, then examine
matched-label trajectories and the broader archived centering comparisons.
A coordinate-permutation reference tests how the observed updates align with
candidate identities, and answer-transition counts distinguish endpoint
agreement from accuracy against ground truth.

Our main model comparisons use Huginn-3.5B and Ouro-1.4B on ARC-Challenge
\citep{clark2018arc} and MMLU \citep{hendrycks2021mmlu}. Each result reports
its scoring condition and question population. Model configurations, data
selection and resampling procedures are collected in
Appendix~\ref{pd:sec:observation}.

\paragraph{Score constructions.}
\label{arxiv:sec:scoring}
In \emph{label scoring} (F1), $v_k$ is candidate $k$'s distinct label token and all
scores come from one predictive distribution:
\begin{equation}
 s^{\mathrm{lab}}_{t,k}=\log p_t(v_k\mid x).
 \label{pd:eq:f1}
\end{equation}
In \emph{answer-text scoring}, candidate $k$ has $n_k$ continuation tokens
$v_{k,1:n_k}$ and character length $\ell_k$, including its leading space:
\begin{equation}
 s^{\mathrm{txt}}_{t,k}=\frac1{\ell_k}\sum_{j=1}^{n_k}
 \log p_t(v_{k,j}\mid x,v_{k,<j}).
 \label{pd:eq:f2}
\end{equation}
Here $x$ is the rendered prompt for the condition being evaluated. Label scoring
displays all answer options. We evaluate answer-text scoring both without the
options block (F2) and with it (F3); F3 uses four balanced arrangements and maps
predictions back to candidate identity. Label/F2 comparisons therefore change
the prompt and continuation execution as well as the scoring formula. Within F3,
every geometric criterion is evaluated on the same execution. The geometric
analysis applies to all these real-valued score vectors; the shared-distribution
probability identities apply to the label scores in Equation~\ref{pd:eq:f1}.

\paragraph{Empirical depth-area decomposition.}
Translation removal admits endpoint-preserving depths whenever
$B_{\rm q}<m\le B_{\rm raw}$. For $N$ complete trajectories, its average
contribution to normalized depth area is
\begin{equation}
 \Delta_{\rm tr}=W_{\rm q}-W_{\rm raw}
 =\frac1{NT}\sum_{i=1}^N(D_{{\rm raw},i}-D_{{\rm q},i}).
 \label{pd:eq:promotion}
\end{equation}
The nested tests decompose the empirical depth area:
\begin{equation}
 W_{\rm res}-W_{\rm raw}
 =\underbrace{(W_{\rm q}-W_{\rm raw})}_{\Delta_{\rm tr}}
 +\underbrace{(W_{\rm dir}-W_{\rm q})}_{\Delta_{\rm dir}}
 +\underbrace{(W_{\rm res}-W_{\rm dir})}_{\Delta_{\rm pair}}.
 \label{r3:eq:depth-components}
\end{equation}
Nesting fixes the signs; magnitudes depend on the joint distribution of margins and updates. At a unique-winner depth, the two relaxations add admissible cases precisely when
\[
 \begin{aligned}
 \text{direction:}&\quad h<m\le B_{\rm q},\\
 \text{pairing:}&\quad m\le h,\qquad u_b<g_b\quad\text{for all }b\ne a.
 \end{aligned}
\]
The signed test includes favorable updates $h<0$, even for $K=2$; pairing
retains each competitor's individual constraint.

\subsection{Geometry under full answer-text scoring}
\label{pd:app:text-geometry}

Full answer-text scoring with all options displayed gives substantial
contributions from both direction and pairing across the four model/task
cells: $\Delta_{\rm dir}=9.77$--$22.07$ percentage points (pp) and
$\Delta_{\rm pair}=10.94$--$12.70$ pp. Their combined contribution beyond
translation removal is $22.46$--$34.38$ pp (Figure~\ref{pd:fig:text-geometry}).
Each task has 32 previously inspected questions, reused from the paired-format
study described in Appendix~\ref{pd:app:populations}. Four balanced arrangements
are averaged within each question before averaging questions. Every method uses
each arrangement's own endpoint and terminal fallback, so paired increments
compare the same executions. These are complete-population geometric comparisons;
correctness is not used to select trajectories.

\begin{figure}[!htbp]
\centering
\includegraphics[width=\linewidth]{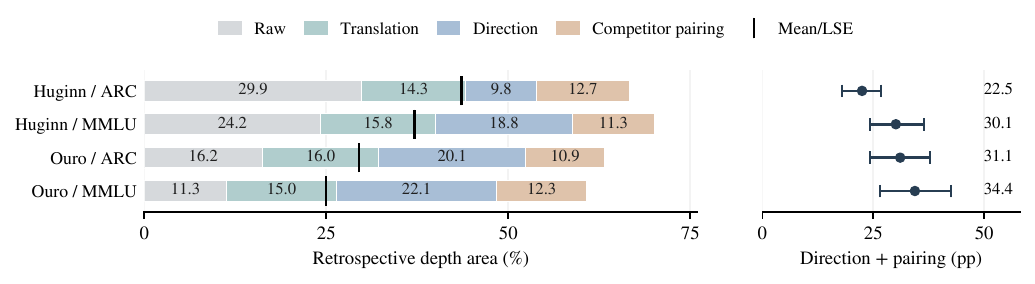}
\caption{Direction and competitor pairing account for substantial conservatism beyond translation. Full answer-text scoring, 32 paired questions per task, four option arrangements. Left: raw depth area and the three increments in Equation~\ref{r3:eq:depth-components}; black marks show the mean/LSE reference. Right: the combined direction and pairing contribution, with 95\% whole-question bootstrap intervals. Candidate depths are $T/4,T/2,3T/4$ with endpoint $T=32$ for Huginn and $T=4$ for Ouro; $D=T$ if no earlier depth qualifies.}
\label{pd:fig:text-geometry}
\end{figure}

Table~\ref{pd:app:tab:text-geometry} gives the per-cell estimates and paired
intervals on the quarter-depth grid. The mean/LSE envelope chooses the earlier
passing depth for each completed trajectory. Here LSE is an algebraic center
of four real-valued text scores; the shared-distribution mass identity applies
only to the label scores in F1.

\begin{table}[t]
\centering\small\setlength{\tabcolsep}{4pt}
\caption{Winner-aware geometry under full answer-text scoring with all options displayed. The same 32 questions per task are evaluated under four balanced arrangements in each model. Increments are retrospective depth-area percentage points on the quarter-depth grid, with marginal 95\% paired question-bootstrap intervals. Native horizons are 32 for Huginn and 4 for Ouro.}
\label{pd:app:tab:text-geometry}
\begin{tabular}{lccc}
\toprule
Model / task & Direction & Competitor pairing & Reserve--quotient\\
\midrule
Huginn / ARC & $9.77\,[7.03,12.50]$ & $12.70\,[9.57,16.02]$ & $22.46\,[17.97,26.76]$ \\
Huginn / MMLU & $18.75\,[13.67,24.41]$ & $11.33\,[6.84,16.21]$ & $30.08\,[24.22,36.33]$ \\
Ouro / ARC & $20.12\,[13.87,27.15]$ & $10.94\,[6.25,15.82]$ & $31.05\,[24.22,37.70]$ \\
Ouro / MMLU & $22.07\,[15.23,29.30]$ & $12.30\,[5.86,19.92]$ & $34.38\,[26.56,42.58]$ \\
\bottomrule
\end{tabular}
\end{table}

Conventional centering already captures most of the translation-related
improvement in these text trajectories: $W_{\rm q}-W_{\rm env}=0.59$--$2.93$
pp. The quotient-over-envelope increments are $0.59\,[0.00,1.56]$ and
$2.93\,[1.37,4.49]$ pp for Huginn ARC/MMLU, and $2.73\,[0.98,5.08]$ and
$1.37\,[0.20,2.93]$ pp for Ouro. The larger remaining gap is resolved by the
direction of relative updates and their pairing with individual gaps.

The comparison also extends across arithmetic settings and observation grids.
With native output-head arithmetic, the combined direction and pairing
increments are 22.46/29.69 pp for Huginn and 31.45/34.38 pp for Ouro. On
Huginn's full native candidate grid $1{:}31$, they are 24.58/23.90 pp;
Ouro's quarter grid already includes every preterminal depth.
Figure~\ref{pd:app:fig:text-cdf} shows the native-grid distributions.

\begin{figure}[!htbp]
\centering
\includegraphics[width=\linewidth]{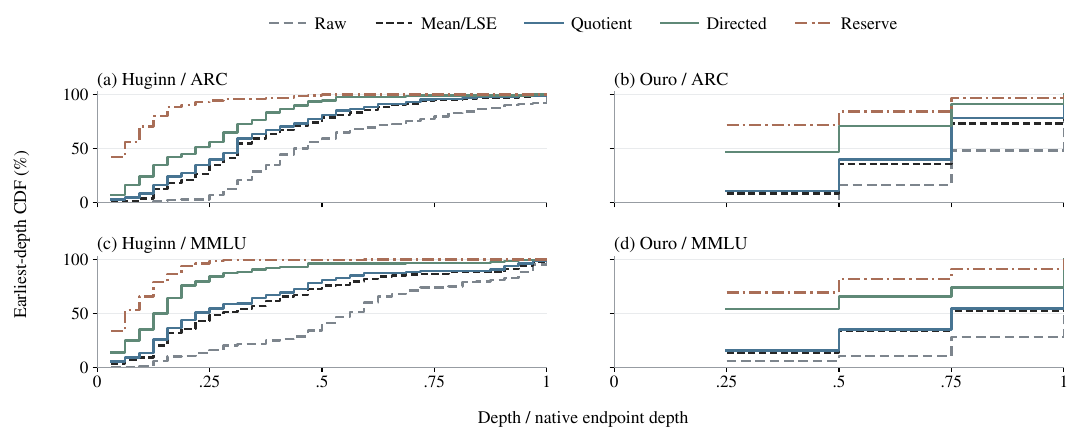}
\caption{Earliest-depth distributions under full answer-text scoring. Curves use each model's native candidate grid and include the shared terminal fallback. Each cell averages four arrangements within each of 32 questions. The decomposition in Figure~\ref{pd:fig:text-geometry} uses the quarter grid for both models.}
\label{pd:app:fig:text-cdf}
\end{figure}

\FloatBarrier
\subsection{Matched-label trajectories}
\label{pd:app:label-geometry}
\label{pd:app:direction-transitions}
\label{pd:app:distributions}

The matched-label collection supplies a shared-distribution readout on a
separate population of $128$ questions per task, absent from the preceding
question manifests. On its four model/task cells,
with $\mathcal T=\{T/4,T/2,3T/4\}$,
$\Delta_{\rm dir}=8.98$--$22.66$ pp and
$\Delta_{\rm pair}=4.30$--$10.74$ pp. Table~\ref{pd:app:tab:levels} reports
the absolute depth areas, and Table~\ref{pd:app:tab:direction} and
Figure~\ref{pd:app:fig:label-increments} give the paired decomposition.

\begin{table}[!htbp]
\centering\small
\caption{Complete normalized depth areas for the four matched-label
model/task conditions, in percent. Candidates are $T/4,T/2,3T/4$;
$N=128$ per task. Passing conditions have zero endpoint disagreements.}
\label{pd:app:tab:levels}
\begin{tabular}{llrrrrrr}
\toprule
Model & Task & $T$ & Raw & Mean & Quotient & Directed & Reserve\\
\midrule
Huginn-3.5B & ARC & 32 & 22.07 & 29.69 & 32.62 & 43.55 & 54.30\\
 & MMLU & 32 & 32.23 & 37.89 & 42.19 & 51.17 & 61.72\\
Ouro-1.4B & ARC & 4 & 31.84 & 35.16 & 37.30 & 51.76 & 56.05\\
 & MMLU & 4 & 21.29 & 23.63 & 26.17 & 48.83 & 55.27\\
\bottomrule
\end{tabular}
\end{table}

\begin{table}[!htbp]
\centering\small
\caption{Paired increases in retrospective depth area, in percentage points,
from translation, direction and competitor pairing. All four population conditions use
$128$ questions; intervals are paired $95\%$ whole-question intervals.}
\label{pd:app:tab:direction}
\begin{tabular}{llrrr}
\toprule
Model & Task & $W_{\rm q}-W_{\rm raw}$ & $W_{\rm dir}-W_{\rm q}$ & $W_{\rm res}-W_{\rm dir}$\\
\midrule
Huginn-3.5B & MMLU & $9.96\,[6.64,13.48]$ & $8.98\,[6.64,11.52]$ & $10.55\,[7.42,13.87]$\\
Huginn-3.5B & ARC & $10.55\,[7.81,13.28]$ & $10.94\,[8.20,13.67]$ & $10.74\,[7.42,14.26]$\\
Ouro-1.4B & MMLU & $4.88\,[3.12,6.84]$ & $22.66\,[19.14,26.37]$ & $6.45\,[3.91,9.38]$\\
Ouro-1.4B & ARC & $5.47\,[3.52,7.62]$ & $14.45\,[11.72,17.19]$ & $4.30\,[2.34,6.64]$\\
\bottomrule
\end{tabular}
\end{table}

The quotient also improves on mean centering in every cell. The paired gains
are $2.93\,[1.56,4.30]$ pp on Huginn ARC,
$4.30\,[2.34,6.64]$ pp on Huginn MMLU,
$2.15\,[0.98,3.52]$ pp on Ouro ARC, and
$2.54\,[1.17,4.30]$ pp on Ouro MMLU.

\begin{figure}[!htbp]
\centering
\includegraphics[width=\linewidth]{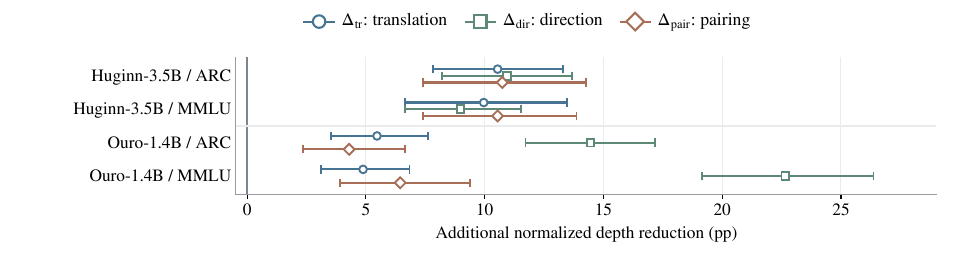}
\caption{Direction and competitor pairing on the matched-label collections. The increments in Equation~\ref{r3:eq:depth-components} on $\{T/4,T/2,3T/4\}$, with paired 95\% whole-question intervals; 128 F1 questions per model and task. Figure~\ref{pd:fig:ladder} locates the additional qualifying questions across the full native grids.}
\label{pd:app:fig:label-increments}
\end{figure}

\FloatBarrier
To locate the gains across recurrent depth, let
$F_j(d)=N^{-1}\sum_i\mathbf1\{D_{j,i}\le d\}$ on the native grid,
with $N=128$ for each label-scored cell. Nesting gives two disjoint sets of
additional questions that have qualified by depth $d$:
\begin{equation}
 \begin{aligned}
 G_{\rm tr}(d)&=F_{\rm q}(d)-F_{\rm raw}(d)
 =\frac1N\#\{i:D_{{\rm q},i}\le d<D_{{\rm raw},i}\},\\
 G_{\rm dp}(d)&=F_{\rm res}(d)-F_{\rm q}(d)
 =\frac1N\#\{i:D_{{\rm res},i}\le d<D_{{\rm q},i}\}.
 \end{aligned}
 \label{arxiv:eq:depth-gains}
\end{equation}
Their sum is $F_{\rm res}(d)-F_{\rm raw}(d)$. On the full integer grid,
$T^{-1}\sum_{d=1}^{T-1}G_{\rm tr}(d)=W_{\rm q}-W_{\rm raw}$ and
$T^{-1}\sum_{d=1}^{T-1}G_{\rm dp}(d)=W_{\rm res}-W_{\rm q}$.
Figure~\ref{pd:fig:ladder} separates these contributions at each depth.
Unlike the cumulative distributions in Figure~\ref{pd:app:fig:text-cdf},
these gain profiles can rise or fall as the stricter criteria catch up;
both vanish at the shared terminal fallback.

\begin{figure}[!htbp]
\centering
\includegraphics[width=\linewidth]{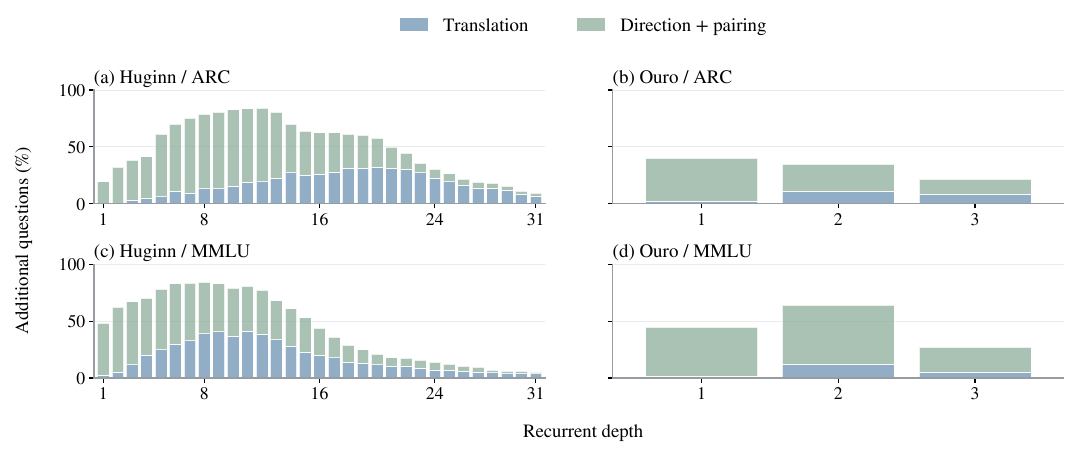}
\caption{Where the geometric gains arise under label scoring. At each preterminal native depth $d$, blue gives the additional fraction of questions that have qualified after translation removal, $G_{\rm tr}(d)$; green adds the contribution of direction and pairing, $G_{\rm dp}(d)$. The stacked height is the reserve-over-raw gain by that depth. Each panel uses 128 F1 questions; $T=32$ for Huginn and $T=4$ for Ouro. Both components are zero at $T$, which is omitted. These profiles locate the gains, whereas Figure~\ref{pd:app:fig:text-cdf} reports complete earliest-depth distributions for the separate answer-text population.}
\label{pd:fig:ladder}
\label{pd:app:fig:arc-cdf}
\end{figure}

\Needspace{8\baselineskip}
\subsection{Translation and centering on archived trajectories}
\label{arxiv:sec:archived-centering}

The archived trajectories provide a larger population and multiple endpoints
for examining translation removal and conventional centers. These previously
inspected Huginn and Recurrent-Llama data retain the native evaluator's scoring
conventions: ARC uses continuation log likelihood normalized by character
length, while each MMLU continuation contains one answer token. The complete
population and configuration are specified in Appendix~\ref{pd:app:populations}.
On Huginn's $N=2{,}704$ native trajectories with endpoint $T=32$ and
candidates $4{:}31$, $W_{\rm raw}=42.65\%$ and $W_{\rm q}=55.28\%$,
giving $\Delta_{\rm tr}=12.63\,[12.01,13.31]$ pp.
The fraction qualifying before the endpoint changes from $88.76\%$ to
$91.46\%$. These statistics use endpoint displacements $s_T-s_t$.
To distinguish local motion from endpoint displacement, define
\[
 \delta_{i,t}^{(T)}=s_{i,T}-s_{i,t},\qquad
 d_{i,t}=s_{i,t+1}-s_{i,t},\qquad
 \delta_{i,t}^{(T)}=\sum_{u=t}^{T-1}d_{i,u}.
\]
Let $\mathcal J$ index the observed adjacent pairs in one archived collection,
and set $\mathcal I=\{(i,t)\in\mathcal J:\|d_{i,t}\|_\infty>0\}$.
Assume $\mathcal I\ne\varnothing$. For $(i,t)\in\mathcal I$, define
\[
 r_{i,t}=2\|d_{i,t}\|_\infty,\qquad
 q_{i,t}=\osc(d_{i,t}),\qquad
 \rho_{i,t}=\frac{q_{i,t}}{r_{i,t}}\in[0,1].
\]
Equation~\ref{pd:eq:centering} gives the exact identities
\begin{equation}
 \begin{aligned}
 r_{i,t}-q_{i,t}&=2|\operatorname{mid}(d_{i,t})|,\\
 \tau&:=\frac{\sum_{\mathcal I}(r_{i,t}-q_{i,t})}
                   {\sum_{\mathcal I}r_{i,t}}
 =1-\frac{\sum_{\mathcal I}q_{i,t}}{\sum_{\mathcal I}r_{i,t}}
 =\sum_{\mathcal I}w_{i,t}(1-\rho_{i,t})\in[0,1],\\
 w_{i,t}&:=\frac{r_{i,t}}{\sum_{\mathcal I}r_{i,t}},\qquad
 \sum_{\mathcal I}w_{i,t}=1,\\
 \rho_{i,t}=0&\iff d_{i,t}\in\operatorname{span}\{\mathbf1\}\setminus\{0\}.
 \end{aligned}
 \label{pd:eq:tau}
\end{equation}
Here $\sum_{\mathcal I}$ sums over $(i,t)\in\mathcal I$; zero increments
contribute neither radius and are omitted to avoid an undefined ratio.
Thus $\tau$ measures the removable fraction of the aggregate adjacent-update
radius, rather than the unweighted frequency of pure translations or a
depth-area gain. In particular,
$\osc(\delta_{i,t}^{(T)})\le\sum_{u=t}^{T-1}\osc(d_{i,u})$ need not be an
equality, so adjacent-update ratios cannot substitute for endpoint tests.
The aggregate raw-radius reductions are $44.76\%$ in Huginn and
$28.41\%$ in Recurrent-Llama. Substantial translation-removable radius remains
under FP32 arithmetic, while the exact zero-contrast increments observed in
BF16 disappear (Section~\ref{pd:app:extended-precision}).

For an endpoint $T$ and candidate set $\mathcal T$, let
$I_t=\{i:s_{i,t}\text{ has a unique maximizer}\}$ and define
\[
 B_{{\rm raw},i,t}=2\|\delta_{i,t}^{(T)}\|_\infty,\qquad
 B_{{\rm q},i,t}=\osc(\delta_{i,t}^{(T)}),\qquad
 \mathcal A_{j,t}=\{i\in I_t:B_{j,i,t}<m_{i,t}\}
 \quad(j\in\{\mathrm{raw,q}\}).
\]
Since $\mathcal A_{{\rm raw},t}\subseteq\mathcal A_{{\rm q},t}$, the
pointwise promotion set is
\begin{equation}
 \mathcal P_t:=\mathcal A_{{\rm q},t}\setminus\mathcal A_{{\rm raw},t}
 =\{i\in I_t:B_{{\rm q},i,t}<m_{i,t}\le B_{{\rm raw},i,t}\}.
 \label{pd:app:eq:promotion-set}
\end{equation}
With $D_{j,i}=\min(\{t\in\mathcal T:i\in\mathcal A_{j,t}\}\cup\{T\})$,
\[
 D_{{\rm q},i}<D_{{\rm raw},i}
 \iff\exists t\in\mathcal T:\ i\in\mathcal P_t,\quad t<D_{{\rm raw},i}.
\]
Indeed, the forward implication takes $t=D_{{\rm q},i}$; the reverse
implication follows from $D_{{\rm q},i}\le t<D_{{\rm raw},i}$.
For Huginn ($N=2{,}704$, $T=32$), $|\mathcal P_{16}|=346$.
The depth-area change in Equation~\ref{pd:eq:promotion} averages the advance
in earliest passing depth over complete trajectories; this pointwise count
alone does not determine that area.

Mean centering gives $W_{\rm mean}=52.25\%$ for archived Huginn at $T=32$.
The archived trajectories provide the multi-endpoint centering comparison in
Table~\ref{pd:tab:centers}. At each $T\in\{24,28,32\}$, all centers use the
same candidates $4{:}T-1$, endpoint displacement $s_T-s_t$, and terminal
fallback $D=T$. For $c\in\{\mathrm{mean,LSE}\}$, define
\[
 \begin{aligned}
 b_{\mathrm{mean},i,t}^{(T)}&=K^{-1}\mathbf1^\top\delta_{i,t}^{(T)},&
 b_{\mathrm{LSE},i,t}^{(T)}&=\LSE(s_{i,T})-\LSE(s_{i,t}),\\
 B_{c,i,t}&=2\|\delta_{i,t}^{(T)}-b_{c,i,t}^{(T)}\mathbf1\|_\infty,&
 D_{c,i}&=\min\bigl(\{t\in\mathcal T:i\in I_t,\ B_{c,i,t}<m_{i,t}\}\cup\{T\}\bigr).
 \end{aligned}
\]
The per-trajectory envelope and its empirical depth area are
\[
 D_{\mathrm{env},i}:=\min\{D_{\mathrm{mean},i},D_{\mathrm{LSE},i}\},\qquad
 W_{\mathrm{env}}:=1-\frac1{NT}\sum_{i=1}^N D_{\mathrm{env},i}.
\]
Optimal centering implies
\[
 D_{{\rm q},i}\le D_{\mathrm{env},i},\qquad
 W_{\rm q}\ge W_{\mathrm{env}}\ge\max\{W_{\mathrm{mean}},W_{\mathrm{LSE}}\}.
\]
The envelope therefore compares against the better center separately on each
completed trajectory, which can be stronger than choosing one center for the
whole population. Its selection is retrospective.

\begin{table}[!htbp]
\centering\small\setlength{\tabcolsep}{4.4pt}
\caption{Endpoint geometry beyond conventional centers. Paired increases in normalized depth area $W$ (pp, 95\% intervals), using all 2,704 native trajectories per checkpoint and candidates $4{:}T-1$. The envelope chooses the earlier mean/LSE result separately for each trajectory.}
\label{pd:tab:centers}
\begin{tabular}{llccc}
\toprule
Model & $T$ & Quotient $-$ mean & Quotient $-$ LSE & Quotient $-$ envelope\\
\midrule
Huginn-3.5B & 24 & $3.28\,[2.89,3.66]$ & $3.23\,[2.86,3.62]$ & $2.34\,[2.03,2.67]$\\
 & 28 & $3.58\,[3.18,4.00]$ & $3.30\,[2.92,3.70]$ & $2.62\,[2.28,2.98]$\\
 & 32 & $3.03\,[2.67,3.40]$ & $2.88\,[2.54,3.24]$ & $2.14\,[1.86,2.45]$\\
\midrule
RL-1.4B (R32) & 24 & $0.86\,[0.70,1.04]$ & $1.05\,[0.89,1.23]$ & $0.62\,[0.49,0.77]$\\
 & 28 & $0.82\,[0.68,0.97]$ & $0.88\,[0.76,1.00]$ & $0.54\,[0.45,0.64]$\\
 & 32 & $0.77\,[0.62,0.93]$ & $1.03\,[0.85,1.21]$ & $0.60\,[0.47,0.74]$\\
\bottomrule
\end{tabular}
\end{table}

At $T=32$, the quotient-over-envelope gains are
$2.14\,[1.86,2.45]$ pp for Huginn and $0.60\,[0.47,0.74]$ pp for
Recurrent-Llama.

\FloatBarrier
\subsection{Observed alignment and coordinate permutations}
\label{arxiv:sec:coordinate-permutations}

The nonnegative depth increments follow from nested criteria; they do not
establish that the model's updates align more favorably with candidate gaps
than alternative assignments of the same update coordinates. To examine
alignment beyond these algebraic inequalities, hold $s_t$ fixed and enumerate
all $24$ coordinate permutations of each four-option $\delta$.
For $K=4$, the exact coordinate-permutation reference is
\begin{equation}
 p_{\rm perm}(s_t,\delta)=\frac1{24}\sum_{P\in\mathfrak S_4}
       \mathbf1\{R_t(P\delta)>0\},\qquad
 R_t(\varepsilon)=\min_{b\ne a}[g_b-(\varepsilon_b-\varepsilon_a)].
 \label{r3:eq:permutation}
\end{equation}
This preserves the update's mean, span, total energy, and contrast energy.
The reference averages strict preservation over the resulting score vectors;
observed preservation is then compared with it on every positive-margin
question-depth pair.

\begin{table}[!htbp]
\centering\small
\caption{Actual preservation versus the exact coordinate-permutation
reference. Values average the three quarter-grid depths within each question,
then $128$ questions. Differences are percentage points.}
\label{pd:app:tab:permutation}
\begin{tabular}{llrrr}
\toprule
Model & Task & Actual (\%) & Reference (\%) & Difference [95\% interval]\\
\midrule
Huginn-3.5B & MMLU & $78.12$ & $82.34$ & $-4.21\,[-7.35,-1.12]$\\
Huginn-3.5B & ARC & $69.53$ & $76.53$ & $-7.00\,[-10.80,-3.14]$\\
Ouro-1.4B & MMLU & $68.75$ & $66.56$ & $2.19\,[-1.94,6.27]$\\
Ouro-1.4B & ARC & $73.18$ & $74.08$ & $-0.90\,[-4.29,2.21]$\\
\bottomrule
\end{tabular}
\end{table}

Table~\ref{pd:app:tab:permutation} shows lower preservation in Huginn than
this magnitude-matched reference and unresolved differences in Ouro. The
reference acts on score coordinates; it does not impose vocabulary
normalization or the model's transition law. Independent permutations at
different depths likewise supply no consistent recurrent trajectory.

\FloatBarrier
\subsection{Endpoint agreement and correctness}
\label{arxiv:sec:answer-correction}

The preceding comparisons concern agreement with the fixed endpoint.
Ground-truth labels distinguish the different consequences of answer changes.
Let $A_t=N^{-1}\sum_i\mathbf1\{y_{i,t}=y_i^\star\}$. For transitions $t\to T$, let $N_{\rm W\to C}$, $N_{\rm C\to W}$ and $N_{\rm W\to W'}$ count corrections, degradations and switches between distinct wrong answers, respectively. Then
\begin{equation}
 \begin{aligned}
 A_T-A_t&=(N_{\rm W\to C}-N_{\rm C\to W})/N,\\
 \frac1N\sum_i\mathbf1\{y_{i,t}\ne y_{i,T}\}
  &=(N_{\rm W\to C}+N_{\rm C\to W}+N_{\rm W\to W'})/N.
 \end{aligned}
 \label{r3:eq:correctness}
\end{equation}
Endpoint disagreement therefore includes correction, degradation, and
switches between wrong answers. Stability and correctness describe distinct
aspects of the same trajectory.

Under full answer-text scoring, endpoint accuracy averaged over all
arrangements is 49.22/43.75\% for Huginn ARC/MMLU and 65.62/49.22\% for Ouro.
Table~\ref{pd:app:tab:text-accuracy} gives the corresponding 95\%
question-bootstrap intervals. As in the geometric comparisons, every
question and arrangement contributes to these estimates.

\begin{table}[!htbp]
\centering\small
\caption{Endpoint accuracy under all-options answer-text scoring, averaged over four arrangements per question. Values are percentages with 95\% whole-question intervals; $N=32$ per task.}
\label{pd:app:tab:text-accuracy}
\begin{tabular}{llc}
\toprule
Model & Task & Accuracy [95\% interval]\\
\midrule
Huginn & ARC & $49.22\,[35.16,62.52]$ \\
Huginn & MMLU & $43.75\,[28.12,59.38]$ \\
Ouro & ARC & $65.62\,[50.00,79.69]$ \\
Ouro & MMLU & $49.22\,[32.81,65.62]$ \\
\bottomrule
\end{tabular}
\end{table}

For the matched-label collection, label indices and choice order were matched
to the original source records. Huginn's F1 endpoint accuracy is $44/128$ on
MMLU and $27/128$ on ARC. Table~\ref{pd:app:tab:transitions} separates the
start-to-end transitions on all $N=128$ questions per task. At Huginn MMLU
$t=4$, $(N_{\rm W\to C},N_{\rm C\to W},N_{\rm W\to W'})=(24,7,30)$:
the net accuracy gain is $13.28$ pp, while 30 answer changes leave the answer
incorrect. Early MMLU contains a net correction component, while
wrong-to-wrong switches remain numerous. Across the three quarter-grid depths,
excess correction, degradation, and wrong-to-wrong frequencies relative to
permutation are $2.04/0.37/1.80$ pp for MMLU and $1.73/1.68/3.59$ pp for ARC.

\begin{table}[!htbp]
\centering\footnotesize
\caption{Huginn-3.5B (0125) F1 start-to-end answer transitions on $128$ questions per
task. W$\to$W denotes a switch to a different wrong answer; W$=$W denotes
the same wrong answer. The five categories sum to $128$ in each row.
Accuracy differences are percentage points.}
\label{pd:app:tab:transitions}
\setlength{\tabcolsep}{3pt}
\begin{tabular}{lrrrrrrr}
\toprule
Task & $t$ & W$\to$C & C$\to$W & W$\to$W & C$\to$C & W$=$W & Accuracy change [95\% interval]\\
\midrule
MMLU & 1 & 25 & 10 & 31 & 19 & 43 & $11.72\,[3.12,20.31]$\\
 & 4 & 24 & 7 & 30 & 20 & 47 & $13.28\,[5.47,21.88]$\\
 & 8 & 18 & 12 & 26 & 26 & 46 & $4.69\,[-3.91,13.28]$\\
 & 16 & 4 & 6 & 10 & 40 & 68 & $-1.56\,[-6.25,3.12]$\\
 & 24 & 1 & 2 & 5 & 43 & 77 & $-0.78\,[-3.12,1.56]$\\
 & 31 & 0 & 1 & 3 & 44 & 80 & $-0.78\,[-2.34,0.00]$\\
\midrule
ARC & 1 & 22 & 23 & 58 & 5 & 20 & $-0.78\,[-11.72,9.38]$\\
 & 4 & 23 & 27 & 51 & 4 & 23 & $-3.12\,[-14.06,7.81]$\\
 & 8 & 21 & 12 & 44 & 6 & 45 & $7.03\,[-1.56,15.62]$\\
 & 16 & 9 & 10 & 10 & 18 & 81 & $-0.78\,[-7.03,6.25]$\\
 & 24 & 3 & 3 & 5 & 24 & 93 & $0.00\,[-3.91,3.91]$\\
 & 31 & 2 & 2 & 1 & 25 & 98 & $0.00\,[-3.12,3.12]$\\
\bottomrule
\end{tabular}
\end{table}

\FloatBarrier

\FloatBarrier
\section{Composition of Score Updates}
\label{pd:sec:readout}

The preceding margin analysis relates score changes to the decision boundary.
Here we hold the scoring construction fixed and examine the composition of
those changes. An orthogonal decomposition separates common and contrast
energy; for scores from a shared predictive distribution, mass and
concentration further resolve the common component. We then use this
decomposition to distinguish changes in absolute energy, relative allocation,
and covariance across questions.

\subsection{Mass and concentration in a shared distribution}
\label{pd:app:mass-interpretation}

Fix a set $\mathcal C$ of $K$ distinct candidate tokens with positive probabilities.
For an interval $(r,t)$, write $\Delta f=f_t-f_r$ and $\delta=\Delta s$.
Let
\[
 M_t=\sum_{k\in\mathcal C}p_{t,k},\qquad
 \pi_{t,k}=p_{t,k}/M_t,\qquad
 \kappa_t=D_{\rm KL}(U_K\Vert\pi_t),\qquad
 P_\perp=I-\mathbf1\mathbf1^\top/K.
\]
Here $M_t$ is the total probability mass assigned to the candidate set,
$\pi_t$ is the conditional distribution within that set, and $U_K$ is the
uniform distribution on its $K$ candidates. The reverse KL divergence
$\kappa_t$ measures within-set concentration relative to uniformity;
$P_\perp$ projects onto the contrast subspace $\mathbf1^\perp$.
The common Euclidean projection uses $\bar\delta$; the optimal
$L_\infty$ center in Equation~\ref{pd:eq:centering} uses the midrange.

\begin{proposition}[Mass and concentration in a shared readout]
\label{pd:prop:readout}
For selected log probabilities as in Equation~\ref{pd:eq:f1},
\begin{equation}
 s_t=\log M_t\,\mathbf1+\log\pi_t.
 \label{pd:eq:mass}
\end{equation}
Across any two depths, the common and contrast components satisfy
\begin{equation}
 c:=\bar\delta=\Delta\log M-\Delta\kappa,
 \qquad P_\perp\delta=P_\perp\Delta\log\pi.
 \label{pd:eq:c}
\end{equation}
The common and contrast components are orthogonal, with energies
\begin{equation}
 \begin{aligned}
 Q&=\|P_\perp\delta\|_2^2,\qquad C=Kc^2,\qquad E=\|\delta\|_2^2=Q+C,\\
 C&=K(\Delta\log M)^2+K(\Delta\kappa)^2
       -2K\Delta\log M\,\Delta\kappa.
 \end{aligned}
 \label{pd:eq:energies}
\end{equation}
\end{proposition}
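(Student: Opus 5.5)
The argument is a direct computation in three steps: the decomposition, the projections, and the energies.

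\emph{Step 1 (decomposition).} By Equation~\ref{pd:eq:f1}, $s_{t,k}=\log p_{t,k}$. Since $\pi_{t,k}=p_{t,k}/M_t$, we have
\[
 s_{t,k}=\log M_t+\log\pi_{t,k}.
\]
This is Equation~\ref{pd:eq:mass}. It is well defined because every $p_{t,k}>0$, so both $M_t>0$ and $\pi_{t,k}>0$.

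\emph{Step 2 (common and contrast components).} Subtracting Equation~\ref{pd:eq:mass} at the two depths gives $\delta=\Delta\log M\,\mathbf1+\Delta\log\pi$.

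For the contrast part, $P_\perp\mathbf1=0$, so applying $P_\perp$ yields $P_\perp\delta=P_\perp\Delta\log\pi$ immediately.

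For the common part, average the coordinates: $\bar\delta=\Delta\log M+K^{-1}\sum_k\Delta\log\pi_k$. The remaining sum is identified with concentration. Writing out the reverse KL divergence,
\[
 \kappa_t=\sum_k\tfrac1K\log\frac{1/K}{\pi_{t,k}}=-\log K-\tfrac1K\sum_k\log\pi_{t,k}.
\]
Hence $K^{-1}\sum_k\log\pi_{t,k}=-\log K-\kappa_t$. The constant $-\log K$ cancels across the two depths, so $K^{-1}\sum_k\Delta\log\pi_k=-\Delta\kappa$, which gives $c=\Delta\log M-\Delta\kappa$.

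This identification of the mean log-conditional probability with $-\kappa$ up to an additive constant is the only nonroutine point. The direction of the KL divergence matters here: it must be $D_{\rm KL}(U_K\Vert\pi)$ and not $D_{\rm KL}(\pi\Vert U_K)$.

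\emph{Step 3 (orthogonality and energies).} Split $\delta=c\mathbf1+P_\perp\delta$, where $c\mathbf1=(I-P_\perp)\delta$ because $\bar\delta\mathbf1$ is the orthogonal projection onto $\operatorname{span}\{\mathbf1\}$. Since $\mathbf1^\top P_\perp\delta=0$, the two pieces are orthogonal, and the Pythagorean theorem gives $E=Q+\|c\mathbf1\|_2^2=Q+Kc^2$. This is the same orthogonal identity already used in the proof of Proposition~\ref{pd:prop:quotient-centers}, taken with $b=0$.

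Finally, substitute $c=\Delta\log M-\Delta\kappa$ into $C=Kc^2$ and expand the square. This produces the stated three-term formula for $C$, including the cross term $-2K\,\Delta\log M\,\Delta\kappa$.
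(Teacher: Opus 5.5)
Your proposal is correct and follows essentially the same route as the paper's proof. Both factor $s_{r,k}=\log M_r+\log\pi_{r,k}$, identify the coordinate mean of $\log\pi_r$ with $-\log K-\kappa_r$ through the reverse KL divergence, use $P_\perp\mathbf1=0$ for the contrast identity, and conclude by Pythagoras and expanding $Kc^2$; the only cosmetic difference is that you average $\delta$ directly instead of computing $\bar s_r$ at each depth and subtracting.
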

Thus $\delta=c\mathbf1\iff\Delta\pi=0$, with $c=\Delta\log M$ in this case:
a pure common update changes candidate-set mass while preserving the
conditional answer distribution.

\begin{proof}[Proof of Proposition~\ref{pd:prop:readout}]
At each of the two depths $r$, let
$M_r=\sum_{k\in\mathcal C}p_{r,k}$ and $\pi_{r,k}=p_{r,k}/M_r$.
Since $p_{r,k}>0$ and $|\mathcal C|=K$ is fixed,
\begin{equation}
 \begin{aligned}
 s_{r,k}&=\log M_r+\log\pi_{r,k},\\
 \kappa_r
 &=\frac1K\sum_{k\in\mathcal C}\log\frac{1/K}{\pi_{r,k}}
 =-\log K-\frac1K\sum_{k\in\mathcal C}\log\pi_{r,k}.
 \end{aligned}
 \label{pd:eq:mass-proof}
\end{equation}
Consequently,
\[
 \bar s_r=\log M_r-\log K-\kappa_r,\qquad
 c=\bar\delta=\Delta\log M-\Delta\kappa.
\]
Moreover, $P_\perp\mathbf1=0$ and $P_\perp=P_\perp^\top=P_\perp^2$ imply
\[
 \begin{aligned}
 P_\perp\delta&=P_\perp\Delta\log\pi,\\
 \delta&=c\mathbf1+P_\perp\delta,&
 \langle c\mathbf1,P_\perp\delta\rangle&=0.
 \end{aligned}
\]
Hence
\[
 E=\|\delta\|_2^2=Kc^2+\|P_\perp\delta\|_2^2=C+Q,
\]
and
\begin{equation}
 C=K(\Delta\log M)^2+K(\Delta\kappa)^2
       -2K\Delta\log M\,\Delta\kappa.
 \label{pd:eq:mass-concentration-proof}
\end{equation}
\end{proof}
For vocabulary logits $z$, $s_{\mathcal C}=z_{\mathcal C}-\LSE(z)\mathbf1$.
Since $P_\perp\mathbf1=0$ and oscillation is translation-invariant,
\begin{equation}
 P_\perp\Delta s_{\mathcal C}=P_\perp\Delta z_{\mathcal C},\qquad
 \osc(\Delta s_{\mathcal C})=\osc(\Delta z_{\mathcal C}).
 \label{pd:eq:vocabulary-invariance}
\end{equation}
Vocabulary normalization therefore leaves both contrast motion and comparison
changes invariant \citep{gao2018softmax}.

\paragraph{Scope of the probability interpretation.}
For an arbitrary score vector, $a_t=\LSE(s_t)$ and
$\pi_t=\operatorname{softmax}(s_t)$ give the algebraic factorization
$s_t=a_t\mathbf1+\log\pi_t$. It implies the same relation
$\bar\delta=\Delta a-\Delta\kappa$. The identity becomes a statement about
candidate-set probability mass when the selected entries are log probabilities
from one predictive distribution. In the shared-prefix label collection this
condition is built into the forward readout. For the archived MMLU precision
panel it was examined through paired full-vocabulary and prefix-state
comparisons, which retain finite-precision discrepancies detailed in
Section~\ref{pd:app:extended-precision}. In a candidate-conditioned text score, the same algebraic
$a_t$ does not represent the mass of a single vocabulary event.

Mass and concentration both act along the common direction, so their
energy contributions include a cross term. Write
$E_M=K(\Delta\log M)^2$, $E_\kappa=K(\Delta\kappa)^2$ and
$E_\times=-2K\Delta\log M\,\Delta\kappa$. Then
\begin{equation}
 E=Q+E_M+E_\kappa+E_\times,\qquad
 \operatorname{sgn}(E_\times)
 =-\operatorname{sgn}(\Delta\log M\,\Delta\kappa).
 \label{pd:eq:signed-components}
\end{equation}
The interaction distinguishes cancellation from reinforcement within each update.
Figure~\ref{pd:fig:mass} reports mean component energies and their signed
shares $\sum_i X_{i,t}/\sum_i E_{i,t}$,
$X\in\{Q,E_M,E_\kappa,E_\times\}$, at every adjacent Huginn depth.
The four shares sum to one. Figure~\ref{pd:app:fig:ouro-energy} reports the
same decomposition over Ouro's complete native transitions, including the
signed mass--concentration interaction.

\begin{figure}[!htbp]
\centering
\includegraphics[width=\linewidth]{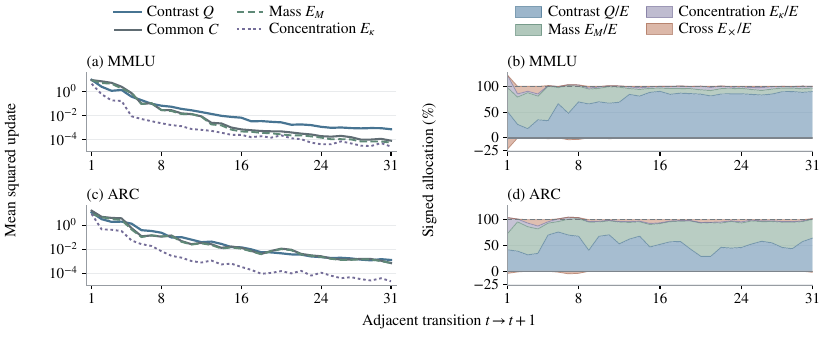}
\caption{Changing energy allocation within contracting updates. All 31 adjacent transitions in Huginn-3.5B (0125), with 128 questions per task under label scoring. Left: mean energies on a logarithmic scale; the signed cross term appears in the right-hand panels. Right: each component's share of total energy, with positive and negative contributions displayed separately. Late MMLU updates allocate a larger share of their energy to contrast even as contrast energy falls; ARC has a different allocation profile. These adjacent increments complement the four-step interval summaries in Table~\ref{pd:tab:energy}.}
\label{pd:fig:mass}
\end{figure}

For the late Ouro MMLU interval $3\to4$,
$(C,E_M,E_\kappa,E_\times)\approx(0.512,0.002,0.514,-0.004)$.
Concentration dominates set-mass energy in this cell. The same common
projection therefore admits different probability compositions across
models and depths.

\begin{figure}[!htbp]
\centering
\begin{subfigure}[t]{.487\linewidth}\centering\includegraphics[width=\linewidth]{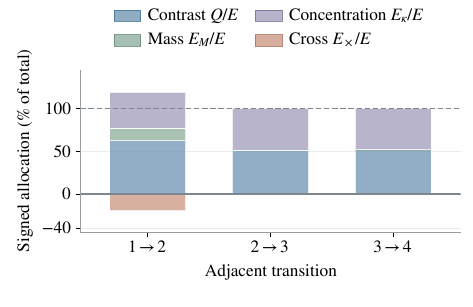}\caption{Ouro-1.4B MMLU.}\label{pd:app:fig:ouro-energy-m}\end{subfigure}\hfill
\begin{subfigure}[t]{.487\linewidth}\centering\includegraphics[width=\linewidth]{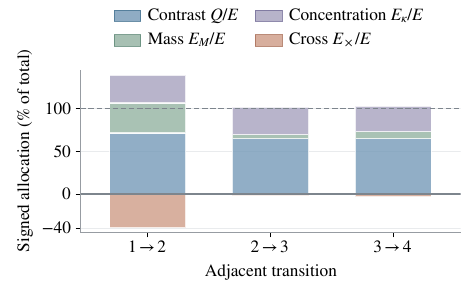}\caption{Ouro-1.4B ARC.}\label{pd:app:fig:ouro-energy-a}\end{subfigure}
\caption{Signed energy allocation over Ouro-1.4B's three native transitions,
using 128 F1 questions per task. The components and denominator match
Figure~\ref{pd:fig:mass}; each bar retains the signed interaction. The
different native horizons describe within-model evolution and do not isolate
architecture as a causal factor.}
\label{pd:app:fig:ouro-energy}
\end{figure}

\subsection{Energy attenuation and weighting}
\label{arxiv:sec:attenuation}
\label{pd:app:depth-weighting}

An increasing contrast share need not indicate growing contrast energy:
the common component may contract faster. To compare these rates,
let $Q_{i,r},C_{i,r}$ be question $i$'s energies averaged over the prescribed
interval starts in window $r\in\{e,\ell\}$, and set $E_{i,r}=Q_{i,r}+C_{i,r}$.
On the same $N$ questions, write $\bar X_r=N^{-1}\sum_i X_{i,r}$ for
$X\in\{Q,C,E\}$. We summarize contrast allocation in two ways:
\[
 D_{{\rm pool},r}=\frac{\bar Q_r}{\bar E_r},\qquad
 q_{i,r}=\frac{Q_{i,r}}{E_{i,r}},\qquad
 D_{{\rm eq},r}=\frac1N\sum_i q_{i,r}.
\]
The pooled share weights questions by their total energy; the equal-question
share averages their individual contrast shares. The pooled quantity requires
$\bar E_r>0$, and the equal-question quantity requires $E_{i,r}>0$ for every
question. If $\bar E_e,\bar E_\ell>0$, then
\begin{equation}
 D_{{\rm pool},\ell}-D_{{\rm pool},e}
 =\frac{\bar Q_\ell\bar C_e-\bar Q_e\bar C_\ell}
 {\bar E_\ell\bar E_e}.
 \label{pd:eq:relative-attenuation}
\end{equation}
For $\bar Q_e,\bar C_e>0$, define
$\rho_Q=\bar Q_\ell/\bar Q_e$ and $\rho_C=\bar C_\ell/\bar C_e$. Consequently,
\begin{equation}
 D_{{\rm pool},\ell}>D_{{\rm pool},e}
 \iff \rho_Q>\rho_C.
 \label{pd:eq:attenuation-ratios}
\end{equation}
In particular, $0\le\rho_C<\rho_Q<1$ gives increasing contrast share
while both absolute energies decrease.

For $E_{i,r}>0$ on every question, the difference between the two summaries
is determined by their covariance under the equal-question empirical measure:
\begin{equation}
 D_{{\rm pool},r}-D_{{\rm eq},r}
 =\frac{\operatorname{Cov}_i(E_{i,r},q_{i,r})}{\bar E_r}.
 \label{pd:eq:weight}
\end{equation}
Questions with larger total energy pull the pooled share toward their own
contrast allocation. For the proof, fix a window $r$ and write
$E_i=E_{i,r}$, $Q_i=Q_{i,r}$ and $d_i=q_{i,r}$. Expectations use the
same equal-question empirical measure.

\begin{proof}
Under the same probability measure on questions, assume $E_i>0$ almost
surely and $0<\Ee E_i<\infty$. Since $0\le d_i\le1$, both $d_i$ and
$E_i d_i=Q_i$ are integrable. By the definition of covariance,
\[
 \begin{aligned}
 \frac{\Ee Q_i}{\Ee E_i}
 &=\frac{\Ee[E_i d_i]}{\Ee E_i}\\
 &=\frac{\Ee E_i\,\Ee d_i+\operatorname{Cov}(E_i,d_i)}{\Ee E_i}
 =\Ee d_i+\frac{\operatorname{Cov}(E_i,d_i)}{\Ee E_i}.
 \end{aligned}
\]
\end{proof}

\paragraph{Early and late interval comparisons.}
In Huginn MMLU, contrast takes a larger share in the late window even as
both contrast and common energies fall. For four-step increments with starts
$4{:}7$ versus $24{:}27$, the 128-question estimates are
$D_{{\rm pool},e}=0.2170$ and $D_{{\rm pool},\ell}=0.7377$, with
$\Delta D_{\rm pool}=0.5207\,[0.4489,0.6010]$.
Table~\ref{pd:tab:energy} gives the corresponding absolute energies and
all four model/task cells. ARC's interval includes zero.

\begin{table}[!htbp]
\centering\small\setlength{\tabcolsep}{4.4pt}
\caption{Absolute energies and relative allocation on 128 questions per cell under label scoring. Mean $Q,C$ have squared-log-probability units. Huginn uses four-step increments with starts $4{:}7$ versus $24{:}27$; Ouro uses $1\to2$ versus $3\to4$. Both $\Delta D$ columns give late-minus-early changes. Mint/bold marks positive $\Delta D$ with lower 95\% limits above zero; complete marginal intervals appear in Table~\ref{pd:app:tab:weighting}.}
\label{pd:tab:energy}
\begin{tabular}{llrrrrrr}
\toprule
 & & \multicolumn{2}{c}{Contrast $Q$} & \multicolumn{2}{c}{Common $C$} & \multicolumn{2}{c}{$\Delta D$ (pp)}\\
\cmidrule(lr){3-4}\cmidrule(lr){5-6}\cmidrule(lr){7-8}
Model & Task & Early & Late & Early & Late & Pooled & Equal\\
\midrule
Huginn-3.5B & ARC & 1.36545 & .00717 & 1.68553 & .00960 & $-1.98$ & $-3.72$\\
 & MMLU & .61025 & .00331 & 2.20178 & .00118 & \cellcolor{palegreen}\textbf{52.07} & \cellcolor{palegreen}\textbf{55.54}\\
Ouro-1.4B & ARC & 4.43761 & .35879 & 1.75132 & .19187 & $-6.55$ & $3.62$\\
 & MMLU & 4.05283 & .55714 & 2.40321 & .51238 & $-10.68$ & \cellcolor{palegreen}\textbf{5.41}\\
\bottomrule
\end{tabular}
\end{table}

The fixed depth prediction concerns both the pooled contrast share
$D=D_{\rm pool}=\Ee Q/\Ee(Q+C)$ and the mass-only approximation error
$\mathcal E_M=\Ee[K(\bar\delta-\Delta\log M)^2]/\Ee C$, with positive
denominators in both ratios. The latter measures the squared error from
approximating common motion by $\Delta\log M$, relative to common energy;
its numerator retains the concentration contribution.
Huginn compares four-step updates starting at depths $4{:}7$ and $24{:}27$;
Ouro compares $1\!\to\!2$ and $3\!\to\!4$.
The two positive MMLU changes occur on the additional Huginn questions but
do not extend to Ouro under its native horizon
(Table~\ref{pd:app:tab:depth-prediction}). Absolute $Q$ and $C$ decrease in
all four model/task conditions. For example, Huginn/MMLU changes from
$(Q,C)=(0.61025,2.20178)$ to $(0.003312,0.001178)$; Ouro/MMLU changes from
$(4.05283,2.40321)$ to $(0.557137,0.512381)$.

\begin{table}[!htbp]
\centering\small
\caption{Original fixed early--late comparisons on $128$ questions per
task and model. Entries are late-minus-early differences with their original
$95\%$ intervals. $D$ is an energy share; $\mathcal E_M$ is a relative
squared error.}
\label{pd:app:tab:depth-prediction}
\begin{tabular}{llrr}
\toprule
Model & Task & $\Delta D$ & $\Delta\mathcal E_M$\\
\midrule
Huginn-3.5B & MMLU & $0.5207\,[0.4489,0.6010]$ & $0.2245\,[0.1094,0.3401]$\\
Huginn-3.5B & ARC & $-0.0198\,[-0.1315,0.0865]$ & $-0.0645\,[-0.1015,-0.0341]$\\
Ouro-1.4B & MMLU & $-0.1068\,[-0.2104,0.0310]$ & $-0.1264\,[-0.2752,0.0150]$\\
Ouro-1.4B & ARC & $-0.0655\,[-0.1449,0.0186]$ & $-0.2945\,[-0.6635,0.0696]$\\
\bottomrule
\end{tabular}
\end{table}

\paragraph{Question weighting.}
In Ouro MMLU, the pooled change is $-0.1068\,[-0.2104,0.0310]$.
The equal-question change is $0.0541\,[0.0054,0.1061]$.
The signs of these changes differ because the summaries assign different
weights to the same questions.

The reported cells have positive aggregated energy
for every question. If this condition fails, the equal-question ratio is
left undefined rather than assigning a value to a zero-energy question.
In Ouro/MMLU the covariance correction changes from $-0.07498$ to
$-0.23592$, reversing the sign relative to equal-question averaging.
Table~\ref{pd:app:tab:weighting} applies both summaries to all four conditions
with a common resampling implementation. Changing the observation interval
also changes effect size: Huginn/MMLU pooled differences are $0.5207$ for
the four-step intervals, $0.4784$ for matched-start one-step intervals, and
$0.1866$ for $8\!\to\!16$ versus $24\!\to\!32$.

\begin{table}[!htbp]
\centering\small
\caption{Pooled and equal-question changes in contrast share, with the
same $5{,}000$ whole-question resamples for all four conditions.}
\label{pd:app:tab:weighting}
\begin{tabular}{llrr}
\toprule
Model & Task & Pooled $\Delta D$ & Equal-question $\Delta D$\\
\midrule
Huginn-3.5B & MMLU & $0.5207\,[0.4488,0.6014]$ & $0.5554\,[0.5137,0.5967]$\\
Huginn-3.5B & ARC & $-0.0198\,[-0.1247,0.0837]$ & $-0.0372\,[-0.0977,0.0219]$\\
Ouro-1.4B & MMLU & $-0.1068\,[-0.2073,0.0315]$ & $0.0541\,[0.0054,0.1061]$\\
Ouro-1.4B & ARC & $-0.0655\,[-0.1470,0.0147]$ & $0.0362\,[-0.0365,0.1057]$\\
\bottomrule
\end{tabular}
\end{table}

\FloatBarrier

\subsection{Marginal means and covariance}
\label{r2:sec:joint-motion}
\label{pd:app:marginal-theory}
\label{pd:app:marginal-results}

The common-energy share also reflects how candidate coordinates vary
together across questions. For a fixed readout, let
$d_{i,t}=s_{i,t+1}-s_{i,t}$ be question $i$'s adjacent score increment.
Group these vectors by task, source subset, choice count $K_g$, and depth;
write $d_{gi}$ for the increment of question $i$ in group $g$, which contains
$n_g\ge1$ questions. Let $P_g=\mathbf1\mathbf1^\top/K_g$ be the common
projector within that group. With common energy
$C_{gi}=(\mathbf1^\top d_{gi})^2/K_g=\|P_gd_{gi}\|_2^2$ and total
energy $E_{gi}=\|d_{gi}\|_2^2$, the observed common-energy fraction is
\begin{equation}
 \Xi_{\rm obs}=\frac{\sum_{g,i}C_{gi}}{\sum_{g,i}E_{gi}}
 =\frac{\sum_{g,i}\|P_gd_{gi}\|_2^2}{E_{\rm tot}},\qquad
 E_{\rm tot}=\sum_{g,i}E_{gi}>0.
 \label{pd:app:eq:xi}
\end{equation}
Write $\mu_g=\Ee_g[d]$ and $\Sigma_g=\operatorname{Cov}_g(d)$, using equal
question weights and empirical divisor $n_g$:
\[
 \mu_g=\frac1{n_g}\sum_i d_{gi},\qquad
 \Sigma_g=\frac1{n_g}\sum_i(d_{gi}-\mu_g)(d_{gi}-\mu_g)^\top.
\]
All covariances are across questions within $g$.

Condition on all increment arrays. For each $g,k$, let $\sigma_{gk}$ be
independent and uniform on the permutations of $\{1,\ldots,n_g\}$, and define
$d^\sigma_{gi,k}=d_{g,\sigma_{gk}(i),k}$.
These permutations preserve every coordinate's empirical distribution and
$E_{\rm tot}$ exactly. Let $\Xi_\sigma$ be the common-energy fraction after
permutation and $\Xi_{\rm marg}=\Ee_\sigma[\Xi_\sigma\mid\{d_{gi}\}]$ its
conditional expectation. All expectations over permutations below are
conditional on these fixed arrays. Then
\begin{equation}
 \begin{aligned}
 \Xi_{\rm obs}
 &=\frac{\sum_g(n_g/K_g)
   [\mathbf1^\top\Sigma_g\mathbf1+(\mathbf1^\top\mu_g)^2]}
 {E_{\rm tot}},\\
 \Xi_{\rm obs}-\Xi_{\rm marg}
 &=\frac1{E_{\rm tot}}\sum_g\frac{n_g}{K_g}
       \sum_{k\ne l}\operatorname{Cov}_g(d_k,d_l).
 \end{aligned}
 \label{pd:app:eq:covariance}
\end{equation}
The reference replaces $\mathbf1^\top\Sigma_g\mathbf1$ by
$\operatorname{tr}(\Sigma_g)$; the sum over $k\ne l$ uses ordered pairs.
If $K_g=K$, $\mu_g=0$, and $\Sigma_g$ is diagonal for every $g$, then
$\Xi_{\rm obs}=\Xi_{\rm marg}=1/K$.
Aligned nonzero means can raise $\Xi_{\rm marg}$ above this value.

\begin{proof}
For every $g,k,l$,
\[
 \sum_{i=1}^{n_g}d_{gi,k}d_{gi,l}
 =n_g\bigl((\Sigma_g)_{kl}+\mu_{gk}\mu_{gl}\bigr).
\]
Independence and uniformity of $\sigma_{gk}$ and $\sigma_{gl}$ for $k\ne l$ give
\[
 \Ee_\sigma\!\left[\sum_{i=1}^{n_g}d^\sigma_{gi,k}d^\sigma_{gi,l}\right]
 =\begin{cases}
 n_g\bigl((\Sigma_g)_{kk}+\mu_{gk}^2\bigr),&k=l,\\
 n_g\mu_{gk}\mu_{gl},&k\ne l.
 \end{cases}
\]
Since $\|P_gd_{gi}\|_2^2=K_g^{-1}\sum_{k,l}d_{gi,k}d_{gi,l}$
and the permutations preserve $E_{\rm tot}$,
\[
 \begin{aligned}
 \Xi_{\rm obs}
 &=\frac1{E_{\rm tot}}\sum_g\frac{n_g}{K_g}
   \left\{\mathbf1^\top\Sigma_g\mathbf1+(\mathbf1^\top\mu_g)^2\right\},\\
 \Ee_\sigma\Xi_\sigma
 &=\frac1{E_{\rm tot}}\sum_g\frac{n_g}{K_g}
   \left\{\operatorname{tr}(\Sigma_g)+(\mathbf1^\top\mu_g)^2\right\}.
 \end{aligned}
\]
Subtracting and using
$\mathbf1^\top\Sigma_g\mathbf1-\operatorname{tr}(\Sigma_g)
 =\sum_{k\ne l}(\Sigma_g)_{kl}$ proves Equation~\ref{pd:app:eq:covariance}.
\end{proof}
The identity permits $n_g=1$ and different $K_g$ across groups; independence
is imposed on the permutations, not on the original increments.

Across all native adjacent ARC increments, Huginn and Recurrent--Llama
have common-energy fractions
77.06\% and 67.51\%, compared with exact marginal expectations 45.37\% and
39.24\%. Their covariance contributions are 31.70 and 28.27 percentage
points. The complete comparisons below also report a finite-permutation
estimate of the reference.

\paragraph{Joint variation across tasks and depths.}
The across-question reference preserves each coordinate's empirical
distribution within task, source subset, choice count, and depth. It answers
a different question from the within-update coordinate permutations in
Section~\ref{arxiv:sec:coordinate-permutations}.

Table~\ref{pd:app:tab:marginal} reports the taskwise native-precision values
from 1,000 such permutations. Large common energy and positive covariance
excess are different properties: Huginn MMLU has both over all increments,
whereas Recurrent-Llama MMLU has a positive common fraction but a negative
excess. Late MMLU, using adjacent destination depths $16{:}32$, has observed
versus reference fractions 14.38/24.83\% in Huginn and 16.05/24.98\% in
Recurrent-Llama. At higher precision, the latter late contrast changes from
$-9.56\,[-12.95,-4.79]$ pp in BF16 to
$-1.23\,[-12.73,9.28]$ pp in full FP32 on the 32-question panel.

\begin{table}[!htbp]
\centering\small
\caption{Native adjacent-increment common energy and its marginal reference
on 1,352 questions per task. Values are percentages. Observed intervals are
95\% whole-question intervals; reference entries are conditional permutation
means. This across-question reference differs from the within-question
displacement-coordinate comparison in Table~\ref{pd:app:tab:permutation}.}
\label{pd:app:tab:marginal}
\begin{tabular}{llrrr}
\toprule
Model & Task & Observed $\Xi$ [95\% interval] & Reference & Difference (pp)\\
\midrule
Huginn-3.5B & ARC & $77.06\,[75.31,78.72]$ & 45.37 & 31.69\\
 & MMLU & $58.01\,[57.18,58.84]$ & 44.32 & 13.69\\
RL-1.4B (R32) & ARC & $67.51\,[64.95,69.82]$ & 39.25 & 28.26\\
 & MMLU & $16.94\,[15.98,17.88]$ & 22.35 & $-5.41$\\
\bottomrule
\end{tabular}
\end{table}

Shared question-level variation can contribute to the covariance sum.
Neither zero excess nor a specified sign establishes coordinate independence.
This reference describes empirical joint structure without assuming
exchangeability under a generative model. The raw-radius-weighted removable fraction $\tau$
(Equation~\ref{pd:eq:tau}) has no corresponding quadratic covariance identity.

\FloatBarrier

\FloatBarrier
\section{Sensitivity to Scoring and Numerical Precision}
\label{arxiv:sec:sensitivity}

The preceding identities characterize a fixed score trajectory; the
observations also depend on how that trajectory is obtained. We examine this
dependence through paired comparisons of scoring construction and numerical
precision, with each scoring format retaining its own endpoint. Uniform
rescaling then identifies which statistics are invariant to score units.

\subsection{Scoring construction and endpoint predictions}
\label{pd:app:format-details}

For Huginn, the paired readouts produce different stability profiles and endpoint
answers: answer-text scoring has lower disagreement with its own endpoint,
while the two readouts often select different endpoint answers. This
comparison changes the prompt and execution as well as the scoring formula.
On the $N=32$ questions per task shared by these conditions, let $y^{(f)}_{i,t}$ be the answer
identity under readout $f\in\{\mathrm{lab},\mathrm{txt}\}$, where label
scoring displays the options and answer-text scoring omits the options block.
With $T=32$ and $\mathcal T=\{8,16,24\}$, define
\begin{equation}
 \begin{aligned}
 \varepsilon_f&=\frac1{N|\mathcal T|}\sum_{i=1}^N\sum_{t\in\mathcal T}
   \mathbf1\{y^{(f)}_{i,t}\ne y^{(f)}_{i,T}\},\qquad
 A_f=\frac1N\sum_i\mathbf1\{y^{(f)}_{i,T}=y_i^\star\},\\
 a_{\rm lab,txt}&=\frac1N\sum_i
   \mathbf1\{y^{(\mathrm{lab})}_{i,T}=y^{(\mathrm{txt})}_{i,T}\}.
 \end{aligned}
 \label{pd:eq:format-summaries}
\end{equation}
Here $\varepsilon_f$ is the fraction of sampled intermediate answers that
disagree with their own endpoint, $A_f$ is endpoint accuracy against the
ground-truth answer $y_i^\star$, and $a_{\rm lab,txt}$ is endpoint agreement
between the two readouts. The paired text-minus-label differences in
$\varepsilon_f$ are
$-28.13$ pp $[-39.58,-17.71]$ on ARC and
$-29.17$ pp $[-41.67,-16.67]$ on MMLU;
$a_{\rm lab,txt}=10/32$ and $12/32$, respectively.
Table~\ref{pd:tab:format} also reports the normalized earliest-depth areas
$W_j$ for raw, mean-centered, and quotient tests.

\begin{table}[!htbp]
\centering\small\setlength{\tabcolsep}{4pt}
\caption{Same-question prediction and geometry: Huginn-3.5B (0125), $N=32$/task, $T=32$, candidate depths $\{8,16,24\}$. F1 denotes label scoring and F2 answer-text scoring without displayed options (Equation~\ref{pd:eq:f2}). Disagreement averages $\mathbf1\{y_t\ne y_{32}\}$; the last columns give $W$ (\%). Blue marks paired disagreement reductions versus label scoring; mint marks quotient gains over mean centering, each with 95\% intervals excluding zero.}
\label{pd:tab:format}
\begin{tabular}{llcrrrr}
\toprule
Task & Format & Correct / $N$ & Disagreement & Raw & Mean & Quotient\\
\midrule
ARC & F1 & $8/32$ & 34.38 & 18.75 & 29.69 & 32.81\\
 & F2 & $14/32$ & \cellcolor{paleblue}\textbf{6.25} & 42.97 & 53.12 & \cellcolor{palegreen}\textbf{57.03}\\
MMLU & F1 & $10/32$ & 36.46 & 19.53 & 24.22 & 25.78\\
 & F2 & $15/32$ & \cellcolor{paleblue}\textbf{7.29} & 34.38 & 45.31 & \cellcolor{palegreen}\textbf{50.78}\\
\bottomrule
\end{tabular}
\end{table}

Under answer-text scoring, $W_{\rm q}-W_{\rm raw}$ is 14.06 pp on ARC and
16.41 pp on MMLU;
$W_{\rm q}-W_{\rm mean}$ is $3.91\,[0.78,8.59]$ and
$5.47\,[0.78,11.72]$ pp, respectively. Both quotient-over-mean intervals
exclude zero, whereas both paired accuracy intervals for
$A_{\rm txt}-A_{\rm lab}$ include zero. Lower endpoint disagreement therefore
does not by itself imply an accuracy gain. The sum and token-normalized scores below complete the comparison, while
Figure~\ref{pd:fig:prediction} shows the prediction profiles.

\begin{figure}[!htbp]
\centering
\begin{subfigure}[t]{.487\linewidth}\centering\includegraphics[width=\linewidth]{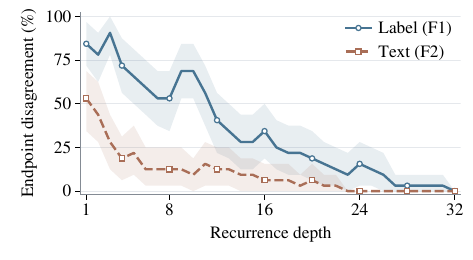}\caption{ARC-Challenge}\label{pd:fig:prediction-arc}\end{subfigure}\hfill
\begin{subfigure}[t]{.487\linewidth}\centering\includegraphics[width=\linewidth]{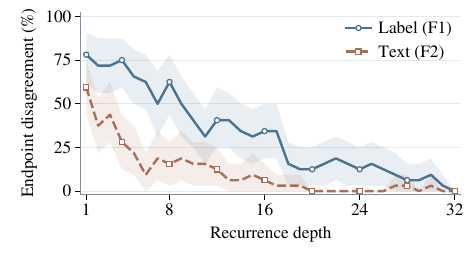}\caption{MMLU}\label{pd:fig:prediction-mmlu}\end{subfigure}
\caption{Distinct prediction trajectories under label and answer-text scoring. Huginn-3.5B (0125) on the same 32 questions per task, with label scoring (F1) and character-normalized answer-text scoring (F2). Each curve compares depth $t$ with its own depth-32 winner; shading shows pointwise 95\% whole-question bootstrap intervals. The scoring formulas are defined in Section~\ref{arxiv:sec:scoring}; Table~\ref{pd:tab:format} gives the paired comparison.}
\label{pd:fig:prediction}
\end{figure}

\paragraph{Score normalization.}

Using the character-normalized F2 score in Equation~\ref{pd:eq:f2},
the alternative normalizations are
\begin{equation}
 s^{(L)}_{t,k}=\frac{\ell_k}{L_k}s^{\mathrm{txt}}_{t,k},
 \qquad L_k\in\{1,n_k,\ell_k\},
 \label{pd:app:eq:text-scoring}
\end{equation}
where $L_k=1,n_k,\ell_k$ gives the sum, token mean, and character mean,
respectively, on the same answer continuations. Character mean is
the main F1/F2 comparison; Table~\ref{pd:app:tab:text-variants} retains all
three variants. Actual target-token log probabilities supply the sums:
later-position columns for the first candidate token do not supply those
targets. Reconstructed scores differ from stored reductions by at most
$1.01\times10^{-5}$, with identical selected answers.

\begin{table}[!htbp]
\centering\footnotesize
\caption{All population scoring variants on the same $32$ questions per
task in Huginn-3.5B (0125). $\Xi$ and $\tau$ are the common-energy and
removable-radius shares (Equations~\ref{pd:app:eq:xi} and~\ref{pd:eq:tau}),
computed as adjacent-increment ratios of sums, in percent.
$\Delta W=W_{\rm q}-W_{\rm raw}$ is in percentage points, relative to each
format's own $T=32$ endpoint; brackets give $95\%$ paired intervals.}
\label{pd:app:tab:text-variants}
\setlength{\tabcolsep}{4pt}
\begin{tabular}{llrrrrr}
\toprule
Task & Score & Correct & $\Xi$ & $\tau$ & $\Delta W$: $8,16,24$ & $\Delta W$: $1{:}31$\\
\midrule
ARC & F1 & $8$ & $59.65$ & $41.19$ & $14.06\,[7.81,21.09]$ & $16.41\,[10.84,23.05]$\\
 & F2 char & $14$ & $71.05$ & $52.42$ & $14.06\,[9.38,19.53]$ & $21.58\,[15.82,27.64]$\\
 & F2 sum & $9$ & $72.04$ & $52.14$ & $17.19\,[10.16,25.00]$ & $21.19\,[15.33,27.34]$\\
 & F2 token & $11$ & $70.92$ & $52.08$ & $11.72\,[6.25,17.19]$ & $18.07\,[12.89,23.34]$\\
\midrule
MMLU & F1 & $10$ & $60.65$ & $44.09$ & $6.25\,[2.34,10.94]$ & $20.12\,[12.01,29.00]$\\
 & F2 char & $15$ & $80.43$ & $53.47$ & $16.41\,[9.38,24.22]$ & $23.14\,[15.92,30.86]$\\
 & F2 sum & $16$ & $91.73$ & $56.57$ & $21.09\,[14.06,29.69]$ & $22.75\,[15.23,31.35]$\\
 & F2 token & $13$ & $77.95$ & $53.15$ & $17.97\,[10.94,25.00]$ & $18.85\,[11.33,26.96]$\\
\bottomrule
\end{tabular}
\end{table}

\paragraph{Endpoint-conditioned comparisons.}
The paired text-minus-label accuracy differences are
$18.75\,[-3.12,40.62]$ and $15.62\,[-3.12,34.38]$ pp. Conditioning on
each format's own correct endpoint gives quotient-over-raw depth increments
of $9.38\,[3.12,18.75]$ pp for ARC/F1 ($n=8$),
$17.86\,[10.71,25.00]$ for ARC/F2 ($n=14$),
$7.50\,[0.00,15.00]$ for MMLU/F1 ($n=10$), and
$15.00\,[8.33,23.33]$ for MMLU/F2 ($n=15$).
Only four ARC and seven MMLU questions belong to both correct-endpoint
subsets. On the paired eight-question precision panels, F1/F2-char depth
increments change from $12.50/9.38$ to $18.75/18.75$ pp on ARC and from
$12.50/18.75$ to $12.50/21.88$ pp on MMLU between BF16 and FP32 recurrence.
All depth quantities normalize recurrence within a scoring format; F2
executes multiple answer continuations whereas F1 uses one shared prefix.

\subsection{Numerical precision and token composition}
\label{pd:app:extended-precision}
The archived-format precision study evaluates the same $32$ questions per task
in both Raven-lineage checkpoints. The three arms use native BF16, an FP32
head applied to the same BF16 hidden values, and full-FP32 recurrence with
the same BF16-rounded weights and saved initial tensors. These arithmetic
changes can affect translation identities in finite precision
\citep{blanchard2021logsumexp}.

We use the retained-radius ratio $\rho_{i,t}$ and removable fraction
$\tau$ defined in Equation~\ref{pd:eq:tau}.
Thus $\tau$ is a raw-radius-weighted removable fraction, whereas the
atom at zero in Figure~\ref{pd:fig:precision}(a,b) counts exact nonzero translations.

\begin{figure}[!htbp]
\centering
\includegraphics[width=\linewidth]{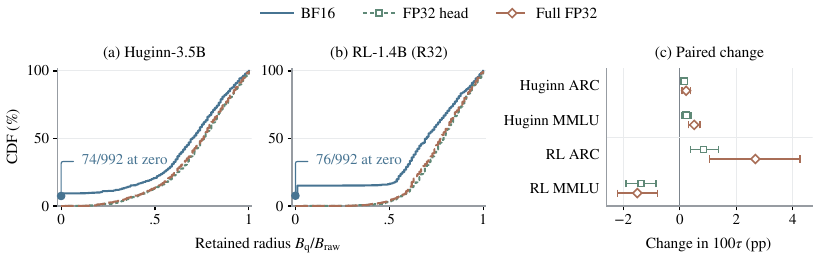}
\caption{Exact plateaus disappear while aggregate translation persists. (a,b) MMLU distributions of $B_{\rm q}/B_{\rm raw}$ over nonzero adjacent increments, sampled on a 0.01 grid; an atom at zero denotes exact translation. (c) Paired changes in raw-radius-weighted $100\tau$, with 95\% whole-question intervals. Each checkpoint has the same 32 questions per task across arms. FP32-head preserves BF16 hidden states; full FP32 changes upstream arithmetic with the same rounded weights.}
\label{pd:fig:precision}
\end{figure}

Table~\ref{pd:app:tab:precision} gives the aggregate removable-radius shares
and $95\%$ intervals from $2{,}000$ paired whole-question resamples.
Native MMLU contains $74/992$ exact zero-span, nonzero-raw increments in Huginn
and $76/992$ in Recurrent--Llama. Both counts become zero in each
higher-precision arm, while aggregate translation removal remains substantial
with model-dependent changes. Thus exact candidate-contrast plateaus depend
on arithmetic precision; these comparisons do not establish precision
invariance of every full-text depth area or mass--concentration result.
Figure~\ref{pd:app:fig:arc-precision} completes the retained-radius distributions
for ARC under the same plotting definitions.

\begin{table}[!htbp]
\centering\small
\caption{Adjacent-increment radius reduction under paired precision changes
in the archived scoring formats. Values are percentages with $95\%$ intervals.
H: Huginn-3.5B (0125); RL: Recurrent-Llama-1.4B (R32).}
\label{pd:app:tab:precision}
\setlength{\tabcolsep}{4pt}
\begin{tabular}{llrrr}
\toprule
Model & Task & BF16 & FP32 head & Full FP32\\
\midrule
H & ARC & $51.88\,[49.13,54.57]$ & $52.04\,[49.23,54.76]$ & $52.11\,[49.29,54.89]$\\
H & MMLU & $44.34\,[42.57,46.02]$ & $44.57\,[42.75,46.33]$ & $44.84\,[42.98,46.63]$\\
RL & ARC & $50.08\,[45.67,54.90]$ & $50.92\,[46.23,56.07]$ & $52.76\,[47.14,58.72]$\\
RL & MMLU & $24.66\,[23.12,26.11]$ & $23.28\,[21.82,24.71]$ & $23.15\,[21.42,24.86]$\\
\bottomrule
\end{tabular}
\end{table}

For ARC, split each score into its first-token and remaining-token terms,
and denote their cross-choice mean increments by $u$ and $v$. The identity
$K(u+v)^2=Ku^2+Kv^2+2Kuv$ yields first/later/cross contributions of
$46.38/35.63/17.99\%$ for Huginn and $44.26/27.77/27.97\%$ for
Recurrent--Llama at native precision. The signed cross term retains the
interaction between token contributions. Matched prefix-only executions also
show numerical shape sensitivity: the maximum vocabulary-log-probability
discrepancy falls from $0.372150$ to $5.8\times10^{-5}$ for Huginn, and from
$0.254886$ to $3.7\times10^{-5}$ for Recurrent--Llama, between BF16 and full
FP32. The pre-head hidden values change as well.

\begin{figure}[!htbp]
\centering
\begin{subfigure}[t]{.487\linewidth}\centering\includegraphics[width=\linewidth]{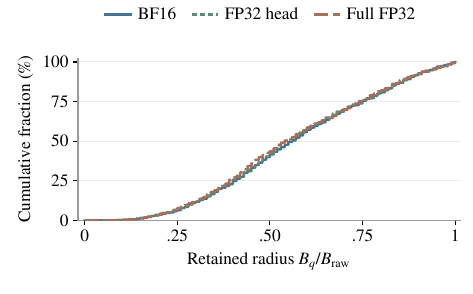}\caption{Huginn-3.5B ARC.}\label{pd:app:fig:arc-precision-h}\end{subfigure}\hfill
\begin{subfigure}[t]{.487\linewidth}\centering\includegraphics[width=\linewidth]{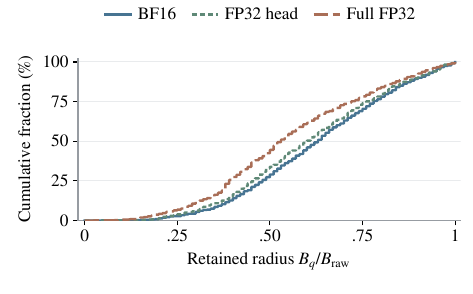}\caption{RL-1.4B (R32) ARC.}\label{pd:app:fig:arc-precision-r}\end{subfigure}
\caption{Precision-dependent retained-radius distributions on ARC. Each
checkpoint uses the same 32 questions across all arithmetic arms; ratios
are sampled on the same 0.01 grid as Figure~\ref{pd:fig:precision}(a,b).
The distribution weights each nonzero adjacent increment equally, whereas
$\tau$ weights removable fractions by the corresponding raw radius.}
\label{pd:app:fig:arc-precision}
\end{figure}

\FloatBarrier
\subsection{Score scale and centering}
\label{pd:app:statistics}
\label{pd:app:scale}
Uniform positive rescaling tests which properties depend on score units.
For $\alpha>0$, every selected answer is preserved. Raw, mean-centered and
quotient radii, directed bounds, reserves, and margins scale by $\alpha$.
On a fixed candidate-depth set,
\begin{equation}
 \begin{aligned}
 (Q,C)(\alpha\delta)&=\alpha^2(Q,C)(\delta),\\
 D_j(\alpha s)&=D_j(s),\qquad
 j\in\{\mathrm{raw,mean,q,dir,res}\}.
 \end{aligned}
 \label{r9:eq:scale-equivariance}
\end{equation}
For $Q+C>0$, the contrast share $Q/(Q+C)$ is therefore invariant.
The same holds for $\tau$ when its raw-radius sum is positive and $\Xi$
when its total energy is positive. LSE centering is not homogeneous:
in general, $\LSE(\alpha s_T)-\LSE(\alpha s_t)\ne
\alpha(\LSE(s_T)-\LSE(s_t))$.

The LSE representative in the original score units is
$a_\alpha(s)=\alpha^{-1}\LSE(\alpha s)$ \citep{gao2018softmax}.
For a fixed finite score vector and the uniform distribution $U_K$ on its
$K$ coordinates, as $\alpha\to0^+$,
\begin{equation}
 a_\alpha(s)-\frac{\log K}{\alpha}
 =\bar s+\frac\alpha2\operatorname{Var}_{U_K}(s)+O(\alpha^2).
 \label{r2:eq:temperature}
\end{equation}
At fixed $K$ across depths, the divergent $\log K/\alpha$ terms cancel,
so $\Delta a_\alpha(s)\to\Delta\bar s$. The following derivation gives an
explicit remainder bound for the expansion.

\begin{proof}[Proof of Equation~\ref{r2:eq:temperature}]
Fix $s\in\R^K$ and write
\[
 f(u)=\log\!\left(\frac1K\sum_{k=1}^K e^{u s_k}\right),\qquad
 \pi_{u,k}=\frac{e^{u s_k}}{\sum_l e^{u s_l}},\qquad L=\osc(s).
\]
The function $f$ is smooth on $\R$, with
\[
 \begin{aligned}
 f(0)&=0,& f'(u)&=\Ee_{\pi_u}s,\\
 f''(u)&=\operatorname{Var}_{\pi_u}(s),&
 f^{(3)}(u)&=\Ee_{\pi_u}(s-\Ee_{\pi_u}s)^3.
 \end{aligned}
\]
Since $\Ee_{\pi_u}s\in[\min_k s_k,\max_k s_k]$,
\[
 |f^{(3)}(u)|\le\Ee_{\pi_u}|s-\Ee_{\pi_u}s|^3\le L^3.
\]
For $\alpha>0$, Taylor's formula with integral remainder yields
\[
 f(\alpha)=\alpha\bar s
   +\frac{\alpha^2}{2}\operatorname{Var}_{U_K}(s)
   +\frac12\int_0^\alpha(\alpha-u)^2f^{(3)}(u)\,du.
\]
Using $a_\alpha(s)=(\log K+f(\alpha))/\alpha$,
\begin{equation}
 \begin{aligned}
 \left|a_\alpha(s)-\frac{\log K}{\alpha}
      -\bar s-\frac\alpha2\operatorname{Var}_{U_K}(s)\right|
 &\le\frac{L^3}{2\alpha}\int_0^\alpha(\alpha-u)^2\,du\\
 &=\frac{\alpha^2L^3}{6}.
 \end{aligned}
 \label{pd:app:eq:temperature-remainder}
\end{equation}
\end{proof}
For fixed $K$, the $\log K/\alpha$ terms cancel in a two-depth difference,
and the two remainder bounds add. This gives the $O(\alpha^2)$ expansion
as $\alpha\to0^+$, uniformly over score vectors with bounded span.

Across the native population, scales
$\alpha\in\{0.1,0.25,0.5,1,2,4,10\}$ preserve all winners and the
homogeneous statistics. With $b_\alpha=\Delta a_\alpha(s)$, the aggregate
excess over the quotient radius after LSE centering, normalized by the original
raw radii, is $\sum 2|\operatorname{mid}(d)-b_\alpha|/\sum2\|d\|_\infty$
for a positive denominator (Equation~\ref{pd:eq:centering}). It changes
from 14.99\% at $\alpha=0.1$ to 21.87\% at $\alpha=1$ and 50.89\% at
$\alpha=10$ for Recurrent-Llama MMLU. Its mean-centered counterpart remains
15.08\%. Mean centering has less residual than LSE centering in all four
model/task cells at native scale. Thus an LSE residual depends on the chosen
score scale; its set-mass interpretation additionally requires a shared
predictive distribution (Section~\ref{pd:app:mass-interpretation}).

\FloatBarrier

\FloatBarrier
\section{From Descriptive Geometry to Prediction}
\label{arxiv:sec:prediction}

The preceding analyses use completed trajectories to characterize score
changes and answer preservation. We now examine what the readout geometry
predicts on held-out questions: first the second moments and common-energy
allocation of updates, then endpoint agreement from an observed prefix.
The fitted comparisons retain their training and evaluation split; the
prefix study uses its own previously specified policy evaluation.

\subsection{Predicting readout moments with matched energy}
\label{pd:app:readout-prediction}
The moment and allocation studies use the previously inspected Huginn
paired-format population of $32$ questions per task
(Appendix~\ref{pd:app:populations}). Question-level folds separate fitting
from evaluation within this population. We compare an independent-logit
innovation reference (N1) with a reference
that applies the model's actual output head to isotropic hidden innovations
(N2). Linear propagation and direct nonlinear readout evaluation use the
same observation map. Four question-level folds contain $24$ training and
eight held-out questions per task; all depths and representations of a
question share its fold. Nonlinear evaluation uses $512$ bounded Rademacher
draws at the five fixed transition anchors $4,8,16,24,31$.

For the following moment and allocation comparisons, $y\in\R^K$ denotes
an observed score increment.
For predicted mean $\mu_j$ and positive-semidefinite covariance $C_j$ with
$\Ee_{\rm train}\operatorname{tr}(C_j)>0$, training-energy matching sets
\begin{equation}
 \alpha_j=
 \frac{\Ee_{\rm train}\|y-\mu_j\|^2}
      {\Ee_{\rm train}\operatorname{tr}(C_j)},
 \qquad \widetilde C_j=\alpha_jC_j.
 \label{pd:app:eq:energy-match}
\end{equation}
The means remain fixed. Losses compare predicted raw second moments with
the observed increment outer products, using the training normalization and
averaging depths within questions. Table~\ref{pd:app:tab:moment-prediction}
reports the comparisons after matching training energy. Linear N2 has higher
loss than N1 on ARC/F1; their MMLU/F1 difference includes zero.
Direct nonlinear evaluation improves ARC/F1 relative to linear N2;
its comparison with matched N1 has an interval extending to zero. F2 here
denotes unnormalized text-score sums.

\begin{table}[!htbp]
\centering\footnotesize
\caption{Paired raw-second-moment loss differences after training-energy
matching. Negative values favor the first-named method. Brackets are
$95\%$ intervals conditional on the fitted models.}
\label{pd:app:tab:moment-prediction}
\setlength{\tabcolsep}{3pt}
\begin{tabular}{@{}llrrr@{}}
\toprule
Task & Score & Linear N2 $-$ N1 & Nonlinear N2 $-$ N1 & Nonlinear $-$ linear N2\\
\midrule
ARC & F1 & $0.1921\,[0.0925,0.2991]$ & $-1.2011\,[-2.3992,0.0046]$ & $-1.3932\,[-2.6346,-0.1480]$\\
MMLU & F1 & $0.0664\,[-0.0258,0.1820]$ & $0.0882\,[-1.1059,1.3802]$ & $0.0218\,[-1.2305,1.3868]$\\
ARC & F2 sum & $-0.0513\,[-0.1099,0.0106]$ & $-0.0873\,[-0.4275,0.1502]$ & $-0.0361\,[-0.4178,0.2301]$\\
MMLU & F2 sum & $0.0706\,[-0.0141,0.1822]$ & $-2.3638\,[-7.2797,0.2055]$ & $-2.4344\,[-7.4285,0.1509]$\\
\bottomrule
\end{tabular}
\end{table}

\FloatBarrier
\subsection{Predicting allocation after matching energy}
\label{arxiv:sec:allocation}
To separate allocation from total scale, set $e=y-\mu$ using the same
training empirical mean for every method, $u=\mathbf1/\sqrt K$,
$A=(u^\top e)^2$, and $B=\|e\|^2-A\ge0$.

\begin{samepage}
Let $p\in[0,1]$ predict the common share of residual energy.
With training residual energy $E_0>0$, the weighted loss
\begin{equation}
 \ell(p;A,B)=\frac{A+B}{E_0}
       \left(p-\frac{A}{A+B}\right)^2
 \label{pd:app:eq:allocation-loss}
\end{equation}
is defined to be zero when $A+B=0$.
\end{samepage}
\begin{proof}[Proof of optimal allocation under the weighted loss]
Condition on the fitted training quantities, including $E_0>0$.
For a fixed $x$, write
\[
 S=A+B,\qquad s_x=\Ee[S\mid x],\qquad a_x=\Ee[A\mid x].
\]
Assume $0<s_x<\infty$, and define $A^2/S=0$ on $\{S=0\}$.
Since $0\le A\le S$ and $0\le A^2/S\le S$,
\[
 0\le a_x\le s_x,\qquad c_x:=\Ee[A^2/S\mid x]<\infty.
\]
For $p\in\R$, expansion of Equation~\ref{pd:app:eq:allocation-loss} gives
\[
 R_x(p):=\Ee[\ell(p;A,B)\mid x]
       =\frac{s_xp^2-2a_xp+c_x}{E_0}.
\]
Setting $p^\star=a_x/s_x\in[0,1]$ yields
\[
 R_x(p)-R_x(p^\star)=\frac{s_x}{E_0}(p-p^\star)^2.
\]
As $s_x/E_0>0$, $p^\star$ is the unique minimizer on both $\R$ and $[0,1]$.
If $s_x=0$, then $S=0$ conditionally almost surely, so $R_x(p)=0$ for every $p$.
\end{proof}
The empirical comparator uses the training
ratio of sums. A two-parameter ridge predictor uses the standardized linear-N2
allocation as its sole feature, with unit slope penalty and unpenalized
intercept. Table~\ref{pd:app:tab:allocation} reports the direct nonlinear
prediction and ridge against the same constant. The earlier two-block loss uses a different objective,
\[
 \ell_2(p;A,B)=(p-A/E_0)^2+(1-p-B/E_0)^2.
\]
\begin{proof}[Proof of the two-block minimizer]
Assume $\Ee[A^2+B^2\mid x]<\infty$ and define
\[
 p_0=\frac12+\frac{\Ee[A-B\mid x]}{2E_0}.
\]
Direct expansion gives, for every $p\in\R$,
\[
 \Ee[\ell_2(p;A,B)-\ell_2(p_0;A,B)\mid x]=2(p-p_0)^2.
\]
Consequently,
\[
 \arg\min_{p\in\R}\Ee[\ell_2(p;A,B)\mid x]=\{p_0\},\qquad
 \arg\min_{p\in[0,1]}\Ee[\ell_2(p;A,B)\mid x]=\{\Pi_{[0,1]}p_0\}.
\]
\end{proof}
All fitted predictions lie in $[0,1]$. The earlier loss gave an MMLU/F2
ridge difference of $-0.1029\,\allowbreak[-0.2699,-0.0078]$.
Under Equation~\ref{pd:app:eq:allocation-loss}, the corresponding
weighted-ridge interval extends to zero. All predictive intervals use
$2{,}000$ paired question resamples with fits and simulated draws held fixed.

\begin{table}[!htbp]
\centering\small
\caption{Energy-allocation loss relative to the training constant under
Equation~\ref{pd:app:eq:allocation-loss}. Negative differences favor the
candidate; the four main task/format conditions are shown.}
\label{pd:app:tab:allocation}
\begin{tabular}{llrr}
\toprule
Task & Score & Nonlinear geometry $-$ constant & Weighted ridge $-$ constant\\
\midrule
ARC & F1 & $0.05625\,[0.02391,0.08962]$ & $-0.01231\,[-0.03936,0.00810]$\\
ARC & F2 sum & $0.09697\,[0.04030,0.15908]$ & $0.00369\,[-0.00844,0.01666]$\\
MMLU & F1 & $0.06529\,[0.02726,0.10676]$ & $-0.00231\,[-0.01554,0.00991]$\\
MMLU & F2 sum & $0.13603\,[0.01379,0.28691]$ & $-0.03744\,[-0.10122,0.00030]$\\
\bottomrule
\end{tabular}
\end{table}

\FloatBarrier
\subsection{Predicting endpoint agreement from a prefix}
\label{pd:app:prefix-prediction}
Completed-trajectory geometry and prefix prediction can also be compared
empirically. An earlier nested, trajectory-held-out prefix study on the native
Huginn collection uses $T=32$, a four-step observation window and candidate
depths $4{:}31$. At the 2.5\% target disagreement level, normalized work
reductions are 21.72\% for raw, 23.93\% for mean-centered and 22.97\% for
quotient ranking, with empirical disagreement among exits of 2.46, 2.54
and 2.54\%, respectively. The paired quotient difference is
$1.26\,[1.08,1.43]$ pp against raw and
$-0.95\,[-1.10,-0.82]$ pp against mean; a train-selected richer comparator
is ahead by $1.36\,[1.01,1.71]$ pp. These earlier policy results concern
observed-prefix rankings rather than the realized increments used throughout
the geometric analysis.

\FloatBarrier

\FloatBarrier
\section{Discussion}

The two decompositions resolve different aspects of a prediction trajectory. Common and contrast components describe what the score update contains; the shared-distribution factorization gives common motion a probability interpretation through mass and concentration. Winner-directed motion and competitor gaps determine whether the update crosses a decision boundary. Consequently, neither update magnitude nor an energy share alone specifies its decision consequence. The measured direction and pairing contributions under label and answer-text scoring show that this distinction accounts for a substantial part of the gap between magnitude bounds and realized answer preservation.

The statistical references further distinguish mathematical structure from
predictive performance. In the matched-label comparison, actual Huginn updates
preserve the winner less often than the exact displacement-coordinate
permutation reference. In the native-precision archive, late MMLU common
energy falls below its coordinate-marginal expectation in both checkpoints. After energy
matching, direct readout geometry does not outperform the constant allocation
reference in the four reported conditions. These results constrain what can
be inferred from a large common component or a favorable retrospective
margin: neither alone identifies a predictive model of future updates.
In the prefix study, quotient ranking improves on the raw comparator but
remains behind mean centering and the richer comparator selected on training
data. Retrospective geometric gains therefore need a separate predictive
evaluation before they can inform a stopping policy.

The study covers multiple-choice predictions from three checkpoints in two model families. Completed-trajectory depth areas describe geometric headroom; an online policy would additionally need to predict future relative updates from a prefix. Interventions on recurrent states could identify the neural processes that generate the mass and concentration terms. Extending the analysis to open-ended generation requires accounting for changing candidate sets and the interaction between recurrent depth and generated context.

\section*{Reproducibility Statement}
Section~\ref{arxiv:sec:scoring} defines the scoring formulas.
Appendix~\ref{pd:sec:observation} specifies fixed model revisions, endpoints,
precision settings, resampling units, and question selection and overlap. Mathematical identities are proved beside their
statements. Each empirical analysis reports its population, comparisons and
uncertainty intervals; predictive analyses in Section~\ref{arxiv:sec:prediction}
retain their fitted training quantities and held-out evaluation roles.

\clearpage
\appendix
\setcounter{table}{0}
\renewcommand{\thetable}{\Alph{section}.\arabic{table}}
\renewcommand{\theHtable}{\Alph{section}.\arabic{table}}
\section{Experimental Design}
\label{pd:sec:observation}

The preceding analyses use complete trajectories under fixed readout and
endpoint choices. We collect here the model configurations, question
populations and resampling procedures needed to reproduce those comparisons.
Score constructions are defined in Section~\ref{arxiv:sec:scoring}; each result
identifies its applicable population. The distinctions between previously
inspected questions and the separate new-question population remain essential
to interpreting the empirical evidence.

\subsection{Checkpoints and trajectory populations}
\label{pd:app:populations}

We evaluate Huginn-3.5B, Recurrent-Llama-1.4B and Ouro-1.4B on ARC-Challenge
\citep{clark2018arc} and MMLU \citep{hendrycks2021mmlu}, with fixed evaluation
rules \citep{biderman2024evaluation}. Huginn and Recurrent-Llama have distinct
weights and training runs within the Raven lineage; Ouro supplies a second
model family. Parameter counts refer to shared weights, separately from
recurrence depth: Huginn's official 3.5B size counts unique parameters;
Recurrent-Llama has 1,385,228,288 parameters after its Llama-3.2-1B retrofit
\citep{mcleish2025recurrent}, including untied embeddings, with R32 denoting its
training recurrence. Ouro's 1.4B shared parameters are independent of its
four-loop inference horizon. Tables~\ref{pd:tab:data} and
\ref{pd:app:tab:checkpoints} give the collections and pinned revisions.

\begin{table}[htb]
\centering\small\setlength{\tabcolsep}{4pt}
\caption{Trajectory collections; $N$ is the number of questions per task. H: Huginn-3.5B; RL: Recurrent-Llama-1.4B; O: Ouro-1.4B. Label, text without options and text with options correspond to F1, F2 and F3. H/O questions are paired for label and displayed-text scoring; H label/text comparisons use the same questions.}
\label{pd:tab:data}
\begin{tabular}{llrcl}
\toprule
Collection & Models & $N$ & $T$ & Scoring condition\\
\midrule
Native trajectories & H, RL & 1,352 & 32 & Native evaluator\\
Matched labels & H, O & 128 & 32, 4 & Label; options displayed\\
Paired label/text & H & 32 & 32 & Label / text without options\\
Text, four arrangements & H, O & 32 & 32, 4 & Text; options displayed\\
Precision comparison & H, RL & 32 & 32 & Native evaluator\\
\bottomrule
\end{tabular}
\end{table}

\begin{table}[!htbp]
\centering\small\setlength{\tabcolsep}{5pt}
\caption{Pinned model configuration. Parameter counts refer to shared weights,
not their repeated use across depths. The Huginn/Ouro matched-label and
answer-text collections use BF16 recurrence with an FP32 output head; the
Huginn/Recurrent-Llama archive uses native BF16. Paired arithmetic arms are
specified in Section~\ref{pd:app:extended-precision}.}
\label{pd:app:tab:checkpoints}
\begin{tabular}{llrc}
\toprule
Model / checkpoint & Pinned revision & Parameters & Evaluated $T$\\
\midrule
Huginn-3.5B (0125) & \texttt{bb6621b65e90} & 3.5B & 32\\
Recurrent-Llama-1.4B (R32) & \texttt{a5f6f126e9d9} & 1.4B & 32\\
Ouro-1.4B & \texttt{574fa66cb8bf} & 1.4B & 4\\
\bottomrule
\end{tabular}
\end{table}

\paragraph{Archived and paired-format populations.}
The archive contains $2{,}704$ questions, equally divided between ARC-Challenge
and MMLU. Huginn and Recurrent-Llama each contribute a complete $32$-depth
trajectory for every question: $86{,}528$ readout vectors and $83{,}824$ adjacent
increments per checkpoint. Archived ARC scores sum continuation log probabilities
and divide by character length; each MMLU continuation contains one answer token.
Four ARC questions have three choices and two have five; all remaining questions
have four. Coordinate permutations are stratified by choice count, task and depth.
A fixed hash selects $32$ questions per task from this archive for the paired-format
study. Huginn evaluates each in F1 and F2 over all $32$ depths; its paired full-FP32
subset contains eight questions per task. These previously inspected data support
the readout-reference and same-question format comparisons, rather than a
new-question confirmation.

\paragraph{Separate matched-label depth population.}
The depth study uses $128$ additional questions per task, absent from the project's
preceding question manifests. Its early--late prediction was specified before
collecting the Huginn outputs. Ouro evaluates the same questions with identical
F1 prompt text at its native four-loop horizon; tokenization remains model-specific.
A fixed $16$-question subset per task and model supplies the paired precision analysis.

\paragraph{Full answer-text geometry on reused questions.}
This collection reuses the same $32$ previously inspected questions per task as
the paired-format study. Huginn and Ouro score every complete answer with character
normalization and all options displayed (F3). Four fixed arrangements,
$(1,2,3,4)$, $(2,4,1,3)$, $(4,3,2,1)$ and $(3,1,4,2)$, balance each answer across
positions while retaining each continuation's semantic identity. Every question
contributes, including those with wrong endpoints. The primary analysis applies
an FP32 output head to each complete sequence after BF16 recurrence, uses native
endpoints $T=32/4$ and the quarter-depth grid, and takes the output at each
recurrent depth without Ouro's gate mixture. The readout formula, arrangements,
depth grid and comparisons were fixed before the Ouro collection. This extends
the analysis to another model family on reused questions without changing their
data role.

Unless a precision comparison is specified, matched-label and text runs use BF16
recurrence with an FP32 output head; archived native trajectories use the native
evaluator's BF16 arithmetic. Section~\ref{pd:app:extended-precision} specifies
the paired arithmetic arms, including the separate native-evaluator comparison
on $32$ questions per task and checkpoint.

\subsection{Paired uncertainty estimates}

We resample whole questions, retaining every depth, scoring variant and paired
condition of each sampled question. Unless specified otherwise, intervals are
marginal $95\%$ percentile bootstrap intervals from $5{,}000$ draws. Archive and
format comparisons describe previously inspected data; the fixed new-question
depth comparison uses the separate $128$-question collection. Precision and
auxiliary prediction comparisons use $2{,}000$ paired draws, with fitted quantities
held fixed for the latter.

\FloatBarrier

\clearpage
\begin{thebibliography}{37}
\providecommand{\natexlab}[1]{#1}
\providecommand{\url}[1]{\texttt{#1}}
\expandafter\ifx\csname urlstyle\endcsname\relax
  \providecommand{\doi}[1]{doi: #1}\else
  \providecommand{\doi}{doi: \begingroup \urlstyle{rm}\Url}\fi

\bibitem[Bae et~al.(2025{\natexlab{a}})Bae, Fisch, Harutyunyan, Ji, Kim, and
  Schuster]{bae2025relaxed}
Sangmin Bae, Adam Fisch, Hrayr Harutyunyan, Ziwei Ji, Seungyeon Kim, and Tal
  Schuster.
\newblock Relaxed recursive transformers: Effective parameter sharing with
  layer-wise {LoRA}.
\newblock In \emph{International Conference on Learning Representations},
  2025{\natexlab{a}}.
\newblock URL
  \url{https://proceedings.iclr.cc/paper_files/paper/2025/hash/54d6a55225cebbdc16fbb0e45c5bdf2b-Abstract-Conference.html}.

\bibitem[Bae et~al.(2025{\natexlab{b}})Bae, Kim, Bayat, Kim, Ha, Schuster,
  Fisch, Harutyunyan, Ji, Courville, and Yun]{bae2025mor}
Sangmin Bae, Yujin Kim, Reza Bayat, Sungnyun Kim, Jiyoun Ha, Tal Schuster, Adam
  Fisch, Hrayr Harutyunyan, Ziwei Ji, Aaron~C. Courville, and Se-Young Yun.
\newblock Mixture-of-recursions: Learning dynamic recursive depths for adaptive
  token-level computation.
\newblock In \emph{Advances in Neural Information Processing Systems},
  volume~38, 2025{\natexlab{b}}.
\newblock \doi{10.52202/085713-3229}.
\newblock URL
  \url{https://papers.nips.cc/paper_files/paper/2025/hash/8b08bbf8b420faa6eeb4020720582ec7-Abstract-Conference.html}.

\bibitem[Bai et~al.(2019)Bai, Kolter, and Koltun]{bai2019deq}
Shaojie Bai, J.~Zico Kolter, and Vladlen Koltun.
\newblock Deep equilibrium models.
\newblock In \emph{Advances in Neural Information Processing Systems},
  volume~32, 2019.
\newblock URL
  \url{https://papers.nips.cc/paper/2019/hash/01386bd6d8e091c2ab4c7c7de644d37b-Abstract.html}.

\bibitem[Belrose et~al.(2023)Belrose, Ostrovsky, McKinney, Furman, Smith,
  Halawi, Biderman, and Steinhardt]{belrose2023tunedlens}
Nora Belrose, Igor Ostrovsky, Lev McKinney, Zach Furman, Logan Smith, Danny
  Halawi, Stella Biderman, and Jacob Steinhardt.
\newblock Eliciting latent predictions from transformers with the tuned lens.
\newblock \emph{arXiv preprint arXiv:2303.08112}, 2023.
\newblock URL \url{https://arxiv.org/abs/2303.08112}.

\bibitem[Biderman et~al.(2024)Biderman, Schoelkopf, Sutawika, Gao, Tow, Abbasi,
  Aji, Ammanamanchi, Black, Clive, DiPofi, Etxaniz, Fattori, Forde, Foster,
  Hsu, Jaiswal, Lee, Li, Lovering, Muennighoff, Pavlick, Phang, Skowron, Tan,
  Tang, Wang, Winata, Yvon, and Zou]{biderman2024evaluation}
Stella Biderman, Hailey Schoelkopf, Lintang Sutawika, Leo Gao, Jonathan Tow,
  Baber Abbasi, Alham~Fikri Aji, Pawan~Sasanka Ammanamanchi, Sidney Black,
  Jordan Clive, Anthony DiPofi, Julen Etxaniz, Benjamin Fattori, Jessica~Zosa
  Forde, Charles Foster, Jeffrey Hsu, Mimansa Jaiswal, Wilson~Y. Lee, Haonan
  Li, Charles Lovering, Niklas Muennighoff, Ellie Pavlick, Jason Phang, Aviya
  Skowron, Samson Tan, Xiangru Tang, Kevin~A. Wang, Genta~Indra Winata,
  Fran{\c{c}}ois Yvon, and Andy Zou.
\newblock Lessons from the trenches on reproducible evaluation of language
  models.
\newblock \emph{arXiv preprint arXiv:2405.14782}, 2024.
\newblock URL \url{https://arxiv.org/abs/2405.14782}.

\bibitem[Blanchard et~al.(2021)Blanchard, Higham, and
  Higham]{blanchard2021logsumexp}
Pierre Blanchard, Desmond~J. Higham, and Nicholas~J. Higham.
\newblock Accurately computing the log-sum-exp and softmax functions.
\newblock \emph{IMA Journal of Numerical Analysis}, 41\penalty0 (4):\penalty0
  2311--2330, 2021.
\newblock \doi{10.1093/imanum/draa038}.
\newblock URL \url{https://doi.org/10.1093/imanum/draa038}.

\bibitem[Chuang et~al.(2024)Chuang, Xie, Luo, Kim, Glass, and
  He]{chuang2024dola}
Yung-Sung Chuang, Yujia Xie, Hongyin Luo, Yoon Kim, James Glass, and Pengcheng
  He.
\newblock {DoLa}: Decoding by contrasting layers improves factuality in large
  language models.
\newblock In \emph{International Conference on Learning Representations}, 2024.
\newblock URL
  \url{https://proceedings.iclr.cc/paper_files/paper/2024/hash/edc36117f795ca52a0cbf6a7b3882859-Abstract-Conference.html}.

\bibitem[Clark et~al.(2018)Clark, Cowhey, Etzioni, Khot, Sabharwal, Schoenick,
  and Tafjord]{clark2018arc}
Peter Clark, Isaac Cowhey, Oren Etzioni, Tushar Khot, Ashish Sabharwal, Carissa
  Schoenick, and Oyvind Tafjord.
\newblock Think you have solved question answering? try {ARC}, the {AI2}
  reasoning challenge.
\newblock \emph{arXiv preprint arXiv:1803.05457}, 2018.
\newblock URL \url{https://arxiv.org/abs/1803.05457}.

\bibitem[Dehghani et~al.(2019)Dehghani, Gouws, Vinyals, Uszkoreit, and
  Kaiser]{dehghani2019universal}
Mostafa Dehghani, Stephan Gouws, Oriol Vinyals, Jakob Uszkoreit, and Lukasz
  Kaiser.
\newblock Universal transformers.
\newblock In \emph{International Conference on Learning Representations}, 2019.
\newblock URL \url{https://openreview.net/forum?id=HyzdRiR9Y7}.

\bibitem[Egozcue et~al.(2003)Egozcue, Pawlowsky-Glahn, Mateu-Figueras, and
  Barcel{\'o}-Vidal]{egozcue2003logratio}
J.~J. Egozcue, V.~Pawlowsky-Glahn, G.~Mateu-Figueras, and C.~Barcel{\'o}-Vidal.
\newblock Isometric logratio transformations for compositional data analysis.
\newblock \emph{Mathematical Geology}, 35\penalty0 (3):\penalty0 279--300,
  2003.
\newblock \doi{10.1023/A:1023818214614}.
\newblock URL \url{https://doi.org/10.1023/A:1023818214614}.

\bibitem[Elhoushi et~al.(2024)Elhoushi, Shrivastava, Liskovich, Hosmer, Wasti,
  Lai, Mahmoud, Acun, Agarwal, Roman, Aly, Chen, and Wu]{elhoushi2024layerskip}
Mostafa Elhoushi, Akshat Shrivastava, Diana Liskovich, Basil Hosmer, Bram
  Wasti, Liangzhen Lai, Anas Mahmoud, Bilge Acun, Saurabh Agarwal, Ahmed Roman,
  Ahmed Aly, Beidi Chen, and Carole-Jean Wu.
\newblock {LayerSkip}: Enabling early exit inference and self-speculative
  decoding.
\newblock In \emph{Proceedings of the 62nd Annual Meeting of the Association
  for Computational Linguistics (Volume 1: Long Papers)}, pages 12622--12642,
  2024.
\newblock \doi{10.18653/v1/2024.acl-long.681}.
\newblock URL \url{https://aclanthology.org/2024.acl-long.681/}.

\bibitem[Gao and Pavel(2017)]{gao2018softmax}
Bolin Gao and Lacra Pavel.
\newblock On the properties of the softmax function with application in game
  theory and reinforcement learning.
\newblock \emph{arXiv preprint arXiv:1704.00805}, 2017.
\newblock URL \url{https://arxiv.org/abs/1704.00805}.

\bibitem[Gaubert and Qu(2015)]{gaubert2015dobrushin}
St{\'e}phane Gaubert and Zheng Qu.
\newblock Dobrushin's ergodicity coefficient for markov operators on cones.
\newblock \emph{Integral Equations and Operator Theory}, 81\penalty0
  (1):\penalty0 127--150, 2015.
\newblock \doi{10.1007/s00020-014-2193-2}.
\newblock URL \url{https://arxiv.org/abs/1307.4649}.

\bibitem[Geiping et~al.(2025)Geiping, McLeish, Jain, Kirchenbauer, Singh,
  Bartoldson, Kailkhura, Bhatele, and Goldstein]{geiping2025latent}
Jonas Geiping, Sean McLeish, Neel Jain, John Kirchenbauer, Siddharth Singh,
  Brian~R. Bartoldson, Bhavya Kailkhura, Abhinav Bhatele, and Tom Goldstein.
\newblock Scaling up test-time compute with latent reasoning: A recurrent depth
  approach.
\newblock In \emph{Advances in Neural Information Processing Systems},
  volume~38, 2025.
\newblock \doi{10.52202/085713-1380}.
\newblock URL
  \url{https://proceedings.neurips.cc/paper_files/paper/2025/hash/3b01972cf31e6fa0fe29e4b8b5c2a0a1-Abstract-Conference.html}.

\bibitem[Geva et~al.(2022)Geva, Caciularu, Wang, and
  Goldberg]{geva2022vocabulary}
Mor Geva, Avi Caciularu, Kevin Wang, and Yoav Goldberg.
\newblock Transformer feed-forward layers build predictions by promoting
  concepts in the vocabulary space.
\newblock In \emph{Proceedings of the 2022 Conference on Empirical Methods in
  Natural Language Processing}, pages 30--45. Association for Computational
  Linguistics, 2022.
\newblock \doi{10.18653/v1/2022.emnlp-main.3}.
\newblock URL \url{https://aclanthology.org/2022.emnlp-main.3/}.

\bibitem[Giannou et~al.(2023)Giannou, Rajput, Sohn, Lee, Lee, and
  Papailiopoulos]{giannou2023programmable}
Angeliki Giannou, Shashank Rajput, Jy-Yong Sohn, Kangwook Lee, Jason~D. Lee,
  and Dimitris Papailiopoulos.
\newblock Looped transformers as programmable computers.
\newblock In \emph{Proceedings of the 40th International Conference on Machine
  Learning}, volume 202 of \emph{Proceedings of Machine Learning Research},
  pages 11398--11442. PMLR, 2023.
\newblock URL \url{https://proceedings.mlr.press/v202/giannou23a.html}.

\bibitem[Goyal et~al.(2024)Goyal, Ji, Rawat, Menon, Kumar, and
  Nagarajan]{goyal2024pause}
Sachin Goyal, Ziwei Ji, Ankit~Singh Rawat, Aditya~Krishna Menon, Sanjiv Kumar,
  and Vaishnavh Nagarajan.
\newblock Think before you speak: Training language models with pause tokens.
\newblock In \emph{International Conference on Learning Representations}, 2024.
\newblock URL
  \url{https://proceedings.iclr.cc/paper_files/paper/2024/hash/76917808731dae9e6d62c2a7a6afb542-Abstract-Conference.html}.

\bibitem[Graves(2016)]{graves2016act}
Alex Graves.
\newblock Adaptive computation time for recurrent neural networks.
\newblock \emph{arXiv preprint arXiv:1603.08983}, 2016.
\newblock URL \url{https://arxiv.org/abs/1603.08983}.

\bibitem[Hao et~al.(2025)Hao, Sukhbaatar, Su, Li, Hu, Weston, and
  Tian]{hao2025coconut}
Shibo Hao, Sainbayar Sukhbaatar, DiJia Su, Xian Li, Zhiting Hu, Jason Weston,
  and Yuandong Tian.
\newblock Training large language models to reason in a continuous latent
  space.
\newblock In \emph{Conference on Language Modeling}, 2025.
\newblock URL \url{https://arxiv.org/abs/2412.06769}.

\bibitem[Hein and Andriushchenko(2017)]{hein2017formal}
Matthias Hein and Maksym Andriushchenko.
\newblock Formal guarantees on the robustness of a classifier against
  adversarial manipulation.
\newblock In \emph{Advances in Neural Information Processing Systems},
  volume~30, 2017.
\newblock URL
  \url{https://proceedings.neurips.cc/paper_files/paper/2017/hash/e077e1a544eec4f0307cf5c3c721d944-Abstract.html}.

\bibitem[Hendrycks et~al.(2021)Hendrycks, Burns, Basart, Zou, Mazeika, Song,
  and Steinhardt]{hendrycks2021mmlu}
Dan Hendrycks, Collin Burns, Steven Basart, Andy Zou, Mantas Mazeika, Dawn
  Song, and Jacob Steinhardt.
\newblock Measuring massive multitask language understanding.
\newblock In \emph{International Conference on Learning Representations}, 2021.
\newblock URL \url{https://openreview.net/forum?id=d7KBjmI3GmQ}.

\bibitem[Holtzman et~al.(2021)Holtzman, West, Shwartz, Choi, and
  Zettlemoyer]{holtzman2021surface}
Ari Holtzman, Peter West, Vered Shwartz, Yejin Choi, and Luke Zettlemoyer.
\newblock Surface form competition: Why the highest probability answer isn't
  always right.
\newblock In \emph{Proceedings of the 2021 Conference on Empirical Methods in
  Natural Language Processing}, pages 7038--7051. Association for Computational
  Linguistics, 2021.
\newblock \doi{10.18653/v1/2021.emnlp-main.564}.
\newblock URL \url{https://aclanthology.org/2021.emnlp-main.564/}.

\bibitem[Jazbec et~al.(2024)Jazbec, Timans, Had{\v z}i~Veljkovi{\'c}, Sakmann,
  Zhang, Naesseth, and Nalisnick]{jazbec2024fast}
Metod Jazbec, Alexander Timans, Tin Had{\v z}i~Veljkovi{\'c}, Kaspar Sakmann,
  Dan Zhang, Christian~A. Naesseth, and Eric Nalisnick.
\newblock Fast yet safe: Early-exiting with risk control.
\newblock In \emph{Advances in Neural Information Processing Systems},
  volume~37, 2024.
\newblock \doi{10.52202/079017-4124}.
\newblock URL
  \url{https://proceedings.neurips.cc/paper_files/paper/2024/hash/ea5a63f7ddb82e58623693fd1f4933f7-Abstract-Conference.html}.

\bibitem[Jeddi et~al.(2026)Jeddi, Ciccone, and Taati]{jeddi2026loopformer}
Ahmadreza Jeddi, Marco Ciccone, and Babak Taati.
\newblock {LoopFormer}: Elastic-depth looped transformers for latent reasoning
  via shortcut modulation.
\newblock In \emph{International Conference on Learning Representations}, 2026.
\newblock URL \url{https://iclr.cc/virtual/2026/poster/10009450}.

\bibitem[Liu et~al.(2024)Liu, Tang, Liu, Ni, Tang, Han, and
  Wang]{liu2024kangaroo}
Fangcheng Liu, Yehui Tang, Zhenhua Liu, Yunsheng Ni, Duyu Tang, Kai Han, and
  Yunhe Wang.
\newblock Kangaroo: Lossless self-speculative decoding for accelerating {LLM}s
  via double early exiting.
\newblock In \emph{Advances in Neural Information Processing Systems},
  volume~37, 2024.
\newblock \doi{10.52202/079017-0381}.
\newblock URL
  \url{https://proceedings.neurips.cc/paper_files/paper/2024/hash/16336d94a5ffca8de019087ab7fe403f-Abstract-Conference.html}.

\bibitem[Liu and Wang(2025)]{liu2025answer}
Xin Liu and Lu~Wang.
\newblock Answer convergence as a signal for early stopping in reasoning.
\newblock In \emph{Proceedings of the 2025 Conference on Empirical Methods in
  Natural Language Processing}, pages 17896--17907. Association for
  Computational Linguistics, 2025.
\newblock \doi{10.18653/v1/2025.emnlp-main.904}.
\newblock URL \url{https://aclanthology.org/2025.emnlp-main.904/}.

\bibitem[McLeish et~al.(2025)McLeish, Li, Kirchenbauer, Kalra, Bartoldson,
  Kailkhura, Schwarzschild, Geiping, Goldstein, and
  Goldblum]{mcleish2025recurrent}
Sean McLeish, Ang Li, John Kirchenbauer, Dayal~Singh Kalra, Brian~R.
  Bartoldson, Bhavya Kailkhura, Avi Schwarzschild, Jonas Geiping, Tom
  Goldstein, and Micah Goldblum.
\newblock Teaching pretrained language models to think deeper with retrofitted
  recurrence.
\newblock \emph{arXiv preprint arXiv:2511.07384}, 2025.
\newblock URL \url{https://arxiv.org/abs/2511.07384}.

\bibitem[Robinson et~al.(2023)Robinson, Rytting, and
  Wingate]{robinson2023multiplechoice}
Joshua Robinson, Christopher~Michael Rytting, and David Wingate.
\newblock Leveraging large language models for multiple choice question
  answering.
\newblock In \emph{International Conference on Learning Representations}, 2023.
\newblock URL \url{https://arxiv.org/abs/2210.12353}.

\bibitem[Saunshi et~al.(2025)Saunshi, Dikkala, Li, Kumar, and
  Reddi]{saunshi2025latent}
Nikunj Saunshi, Nishanth Dikkala, Zhiyuan Li, Sanjiv Kumar, and Sashank~J.
  Reddi.
\newblock Reasoning with latent thoughts: On the power of looped transformers.
\newblock In \emph{International Conference on Learning Representations}, 2025.
\newblock URL
  \url{https://proceedings.iclr.cc/paper_files/paper/2025/hash/2676109d49d1eb26d6bc584a8f556305-Abstract-Conference.html}.

\bibitem[Schuster et~al.(2022)Schuster, Fisch, Gupta, Dehghani, Bahri, Tran,
  Tay, and Metzler]{schuster2022calm}
Tal Schuster, Adam Fisch, Jai Gupta, Mostafa Dehghani, Dara Bahri, Vinh~Q.
  Tran, Yi~Tay, and Donald Metzler.
\newblock Confident adaptive language modeling.
\newblock In \emph{Advances in Neural Information Processing Systems},
  volume~35, 2022.
\newblock URL
  \url{https://papers.nips.cc/paper_files/paper/2022/hash/6fac9e316a4ae75ea244ddcef1982c71-Abstract-Conference.html}.

\bibitem[Snell et~al.(2025)Snell, Lee, Xu, and Kumar]{snell2025testtime}
Charlie Snell, Jaehoon Lee, Kelvin Xu, and Aviral Kumar.
\newblock Scaling {LLM} test-time compute optimally can be more effective than
  scaling parameters for reasoning.
\newblock In \emph{International Conference on Learning Representations}, 2025.
\newblock URL
  \url{https://proceedings.iclr.cc/paper_files/paper/2025/hash/1b623663fd9b874366f3ce019fdfdd44-Abstract-Conference.html}.
\newblock Oral presentation.

\bibitem[Tsuzuku et~al.(2018)Tsuzuku, Sato, and Sugiyama]{tsuzuku2018lipschitz}
Yusuke Tsuzuku, Issei Sato, and Masashi Sugiyama.
\newblock Lipschitz-margin training: Scalable certification of perturbation
  invariance for deep neural networks.
\newblock In \emph{Advances in Neural Information Processing Systems},
  volume~31, 2018.
\newblock URL
  \url{https://proceedings.neurips.cc/paper_files/paper/2018/hash/485843481a7edacbfce101ecb1e4d2a8-Abstract.html}.

\bibitem[Xin et~al.(2020)Xin, Tang, Lee, Yu, and Lin]{xin2020deebert}
Ji~Xin, Raphael Tang, Jaejun Lee, Yaoliang Yu, and Jimmy Lin.
\newblock {DeeBERT}: Dynamic early exiting for accelerating {BERT} inference.
\newblock In \emph{Proceedings of the 58th Annual Meeting of the Association
  for Computational Linguistics}, pages 2246--2251, 2020.
\newblock \doi{10.18653/v1/2020.acl-main.204}.
\newblock URL \url{https://aclanthology.org/2020.acl-main.204/}.

\bibitem[Yang et~al.(2026)Yang, Han, Zhang, Wei, Shao, Guo, and
  Li]{yang2026stars}
Xiao-Wen Yang, Ziyu Han, Xi-Hua Zhang, Wen-Da Wei, Jie-Jing Shao, Lan-Zhe Guo,
  and Yu-Feng Li.
\newblock Stabilizing recurrent dynamics for test-time scalable latent
  reasoning in looped language models.
\newblock In \emph{International Conference on Machine Learning}, 2026.
\newblock URL \url{https://arxiv.org/abs/2605.26733}.

\bibitem[Yom~Din et~al.(2024)Yom~Din, Karidi, Choshen, and
  Geva]{yomdin2024jump}
Alexander Yom~Din, Taelin Karidi, Leshem Choshen, and Mor Geva.
\newblock Jump to conclusions: Short-cutting transformers with linear
  transformations.
\newblock In \emph{Proceedings of the 2024 Joint International Conference on
  Computational Linguistics, Language Resources and Evaluation}, pages
  9615--9625. ELRA and ICCL, 2024.
\newblock URL \url{https://aclanthology.org/2024.lrec-main.840/}.

\bibitem[Zheng et~al.(2024)Zheng, Zhou, Meng, Zhou, and
  Huang]{zheng2024selectors}
Chujie Zheng, Hao Zhou, Fandong Meng, Jie Zhou, and Minlie Huang.
\newblock Large language models are not robust multiple choice selectors.
\newblock In \emph{International Conference on Learning Representations}, 2024.
\newblock URL
  \url{https://proceedings.iclr.cc/paper_files/paper/2024/hash/54dd9e0cff6d9214e20d97eb2a3bae49-Abstract-Conference.html}.

\bibitem[Zhu et~al.(2025)Zhu, Wang, Hua, et~al.]{zhu2025ouro}
Rui-Jie Zhu, Zixuan Wang, Kai Hua, et~al.
\newblock Scaling latent reasoning via looped language models.
\newblock \emph{arXiv preprint arXiv:2510.25741}, 2025.
\newblock URL \url{https://arxiv.org/abs/2510.25741}.

\end{thebibliography}
\end{document}